\documentclass{article}
\PassOptionsToPackage{hypertexnames=false}{hyperref}
\usepackage{arxiv,times}
\usepackage{custom}

\usepackage{placeins}
\usepackage[mode=tex]{standalone}
\usepackage{comment}

\title{Careful Judge: Safe and Efficient Human--AI Collaborative Decision Making}

\iclrfinalcopy
\author{Chenyu Zhang\thanks{Work done while at an internship at NVIDIA.}\\
MIT\\
\texttt{zcysxy@mit.edu} \\
\And Rachel Luo  \\
NVIDIA \\
\texttt{raluo@nvidia.com}\\
\And Boyi Li  \\
NVIDIA \\
\texttt{boyil@nvidia.com}\\
\And Anjali Parashar\footnotemark[1]\\
MIT\\
\texttt{anjalip@mit.edu}
\And Marco Pavone \\ 
NVIDIA, Stanford University \\
\texttt{mpavone@nvidia.com}
\And Apoorva Sharma \\
NVIDIA \\
\texttt{apoorvas@nvidia.com}
}

\begin{document}

\maketitle

\begin{abstract}
	In human--AI collaborative decision making, human review can prevent unsafe AI decisions, but each human judgment is costly.
	Treating human intervention as a one-off fallback misses the opportunity to improve future AI decisions, yet adaptively learning from human feedback changes the model and can invalidate previously calibrated safety guardrails.
	We approach this challenge with \method---\underline{c}alibrated \underline{a}daptive \underline{r}ectification and \underline{e}scalation---an end-to-end pipeline that combines AI models and human reviewers to guarantee safe, human-aligned decisions, while continuously learning from human feedback to achieve greater automation with fewer human queries.
	\method is principled, general, modular, and works with any black-box AI model.
	We equip it with a rectification module that improves AI decisions by learning the structure of human--AI misalignment.
	Our novel adaptive calibration module guarantees risk control at every time step for any rectification module.
	Experiments on four safety-critical real-world datasets spanning language, vision, driving, and robotics demonstrate that \method achieves human-aligned decisions while reducing human queries by 25--81\% relative to baselines.
\end{abstract}

\section{Introduction}

Human--AI collaboration is increasingly used in safety-critical applications, including
data triage and evaluation \citep{hemmer2025complementarity,mozannar2020consistent},
autonomous driving \citep{luo2025Sim2ValLeveraging,sinha2024RateInformedDiscovery},
robot monitoring \citep{sermanet2025generating},
and model alignment \citep{christiano2017deep,lee2023rlaif,bai2022constitutional}.
In these settings, AI predictions represent an inexpensive, but potentially misaligned, way to scale reliable, but costly, human judgment. 
Deciding when to trust AI and when to call for human review creates a safety--efficiency trade-off \citep{el-yaniv2010FoundationsNoisefree}.

Existing methods address different parts of this trade-off.
Some provide safety guarantees but may require frequent human intervention
\citep{jung2025Trustescalate,ren2023RobotsThat};
others learn from human feedback to improve efficiency without providing such guarantees
\citep{angelopoulos2023PredictionPoweredInference,christiano2017deep}.
Naively combining the two directions may amplify their respective weaknesses and worsen the trade-off%
---splitting already scarce human feedback between model improvement and safety checks could make the former noisier and the latter more conservative, as is evident in \cref{sec:exp}.
Still, we ask for the best of both worlds: \emph{how can a human--AI system continue to make safe decisions while learning from human feedback to reduce future intervention?}

A positive answer requires true collaboration: a human provides feedback when AI is uncertain, and AI becomes more confident when facing similar scenarios in the future.
Such dynamic interaction should uncover the structure of human--AI misalignment and close this gap, making AI decisions safer and human intervention less frequent.
However, this structure may be difficult to learn when human review is sparse and selective.
Moreover, safety guarantees must keep pace with each model update.
Specifically, queries selectively reviewed by a human can differ statistically from future ones, and earlier safety checks may no longer be valid after model updates, so calibration must efficiently track shifts in both the data distribution and the model.

These challenges call for model-agnostic risk control for every decision to be made,
and model-aware adaptive learning after every decision is made.
Model awareness enables learning \emph{efficiency}, while model agnosticism ensures \emph{safety} in the presence of model misspecification.
Driven by these two objectives, we propose \emph{calibrated adaptive rectification and escalation} (\method), a principled, general, and modular framework that combines AI models and human reviewers into an ensemble decision-maker. \method guarantees safe, human-aligned decisions and continuously learns from human feedback to achieve greater automation while reducing human queries.

We instantiate \method in binary classification, using an exact risk decomposition (\cref{sec:decomp}) to motivate its four modules:
\begin{itemize*}
	\item[(\cref{sec:rect})] \emph{Residual Gauss} rectifies AI predictions and uncertainty estimates by
    fitting a Gaussian process to the residual between human and AI decisions;
	\item[(\cref{sec:cali})] \emph{Confident Jack} calibrates the rectified model with statistical and computational efficiency;
	\item[(\cref{sec:esc})] \emph{Exacting Gates} escalates queries to a human using a scenario-specific threshold informed by calibration; and
	\item[(\cref{sec:ada})] \emph{Active Max} adaptively selects informative human queries.
\end{itemize*}
All modules are designed for flexible offline--online learning schedules and share two objectives: safety and efficiency.
An overview of \method is provided in \cref{fig:pipe}.

\begin{figure}[t]
	\centering
	\begin{subfigure}[c]{0.40\textwidth}
		\centering
        \vspace{-1.5em}
		\includegraphics[width=\linewidth]{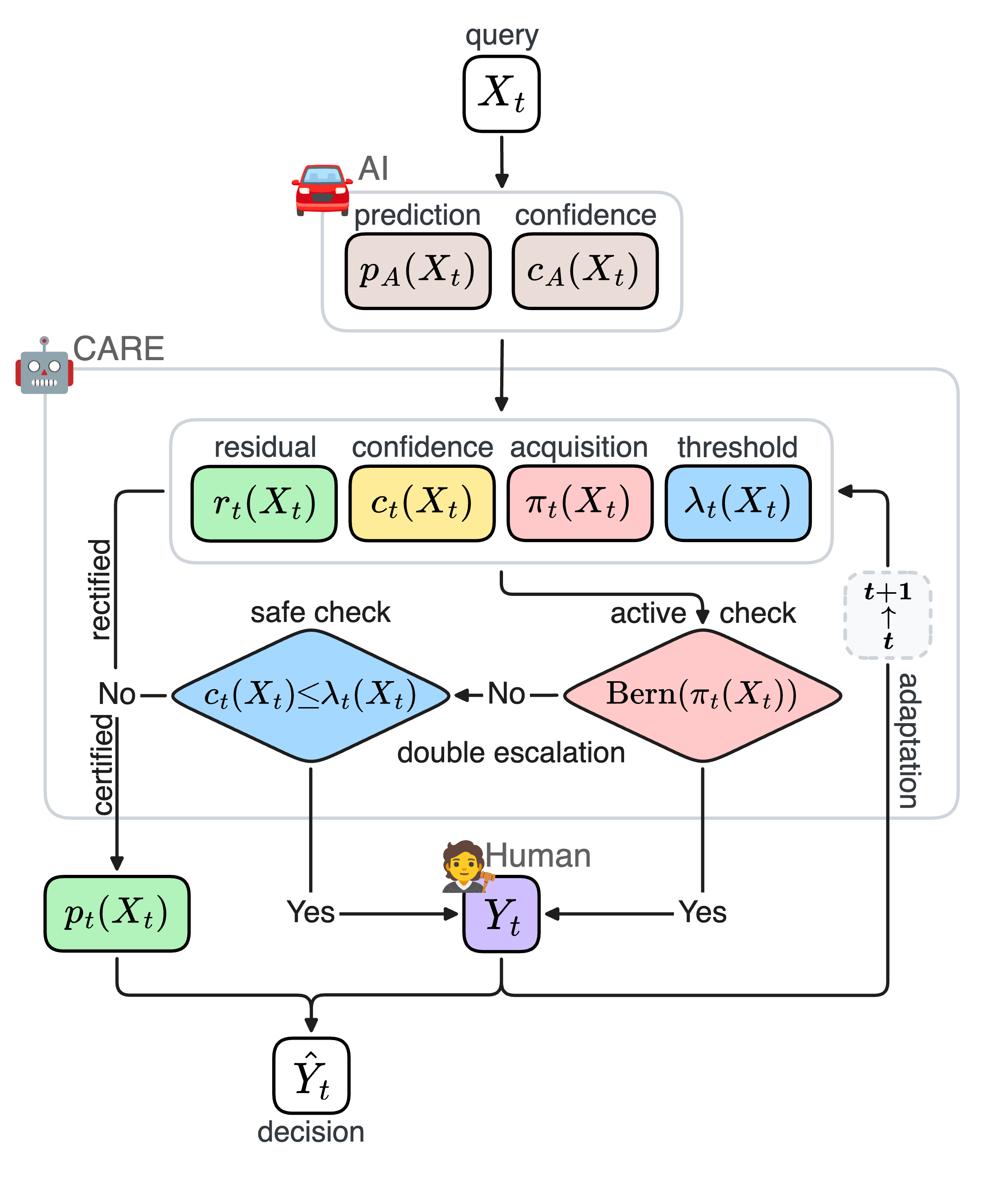}
        \vspace{-2.5em}
		\label{fig:pipe-loop}
	\end{subfigure}
    \unskip\hfill{\color{lightgray}\vrule}\hfill
	\begin{subfigure}[c]{0.59\textwidth}
		\centering
		\includegraphics[width=\linewidth]{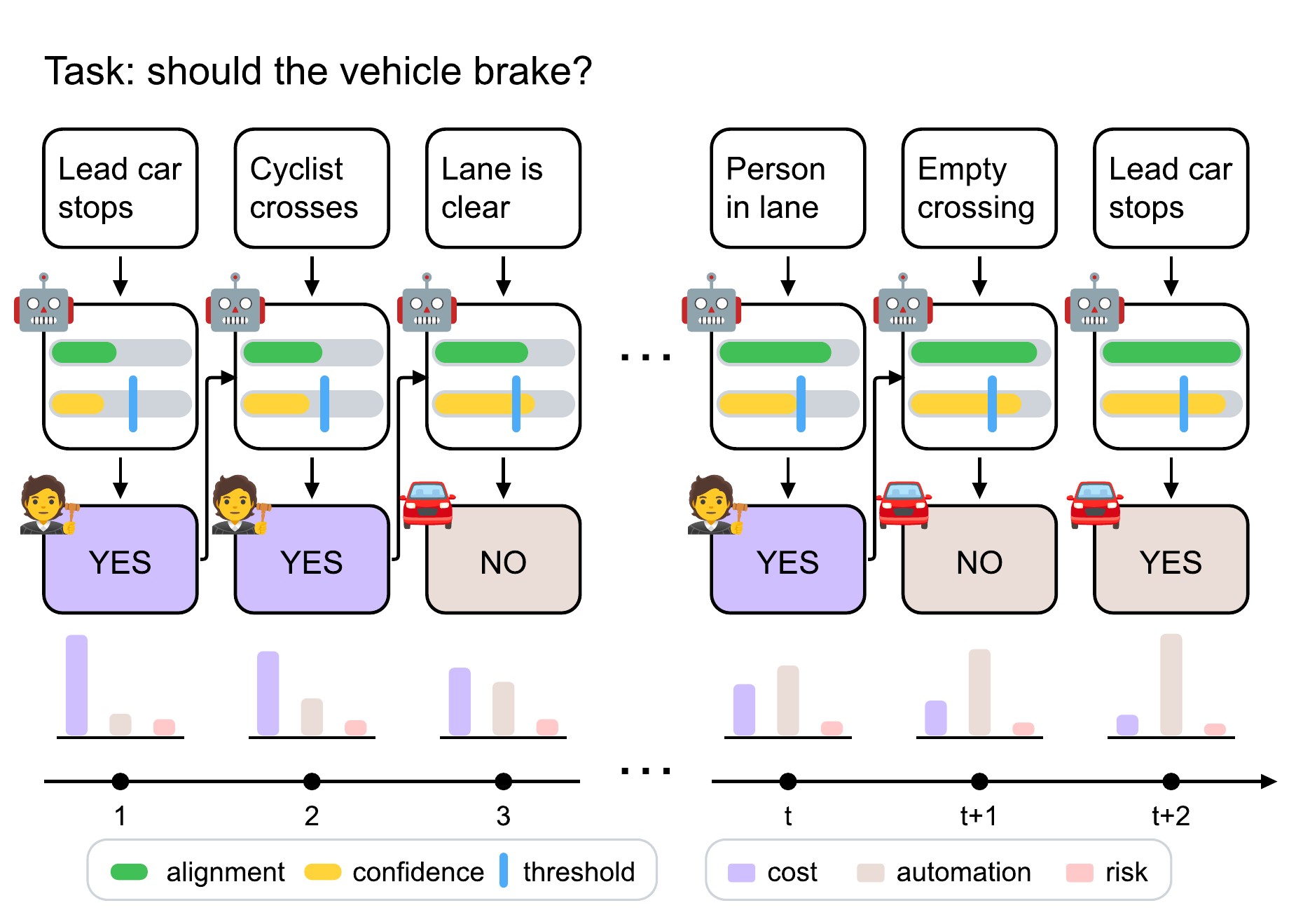}
		\label{fig:pipe-life}
	\end{subfigure}
	\caption{\textbf{(left)} Per-sample decision pipeline. Upon a query, \method rectifies AI's prediction and confidence by correcting the human--AI residual and reducing uncertainty, based on past human feedback. The rectified quantities are then passed through two checks: an active acquisition rule that escalates informative queries, and a confidence gate that ensures the risk of the final human--AI decision is below the tolerance. Once a query is escalated, the human's feedback is used as the final decision and to update \method's modules.
		\textbf{(right)} Adaptive learning from human feedback. \method learns to adapt to the revealed human preference over time and across scenarios, continuously increasing alignment and reducing the uncertainty of the rectified AI model, and thus improving efficiency.}
	\label{fig:pipe}
\end{figure}

Specifically, our contributions are:
\begin{itemize}
	\item We formalize human--AI collaboration as a general statistical decision problem with a flexible offline--online schedule, covering different scenarios and model lifecycles.
	\item We derive an exact conditional risk decomposition into four sources, each motivating a distinct component of \method---an end-to-end human--AI collaboration pipeline.
	\item We develop a correlation-aware Bayesian residual rectifier that combines a learned kernel with AI-imputed uncertainty. 		It propagates human feedback across similar scenarios, enabling faster adaptation and reducing future escalation frequency.
	\item We develop adaptive weighted confidence-based cross-conformal calibration that reuses labels collected for learning and accommodates arbitrary model updates under history-dependent acquisition. It provides model-agnostic marginal risk control at every time step.
	\item We evaluate \method on four safety-critical real-world datasets spanning language, vision, driving, and robotics, demonstrating human-aligned decisions with substantially fewer human queries.
\end{itemize}

\subsection{Related work}

\textbf{AI-accelerated human feedback.}
As AI models are applied to increasingly complex tasks, high-quality supervision is a key bottleneck. To alleviate this, some works consider using learned models to scale supervision. Reinforcement learning from human feedback (RLHF) distills human preference data into a reward model \citep{christiano2017deep}, while other techniques bring in human feedback at a higher level, in rubric or constitution design \citep{bai2022constitutional, lee2023rlaif, zheng2023judging}. Such techniques have been instrumental in advancing open-world physical AI, with LLM/VLM rewards scoring decision-making in robotics and autonomous vehicles \citep{ma2023eureka, sima2023drivelm, huang2024vlm-rl, sermanet2025generating, rocamonde2024vision}. However, ensuring alignment between the AI feedback and what a human expert would provide remains critical, especially in safety-critical settings. Recent literature applies ideas from selective classification and abstention \citep{el-yaniv2010FoundationsNoisefree,vardhan2026ImprovingSelective,vishwakarmaPromisesPitfallsThresholdbased2024} to applications of AI feedback, allowing AI to escalate hard cases to human experts \citep{jung2025Trustescalate}. We extend these ideas to allow for online adaptation, improving efficiency while maintaining alignment guarantees. Moreover, we find that the local online adaptation of \method can mitigate the tendency of static selective escalation to exacerbate disparities among regions in the input space \citep{jones2021selective}.

\textbf{Model-assisted inference.}\label{rmk:ppi}
A related line of research explores combining cheap proxy measurements and limited expensive measurements in the context of variance-reduced statistical inference, where the cheap proxy can be a prediction model \citep{angelopoulos2023PredictionPoweredInference,zrnic2026ActiveStatistical} or a simulator \citep{luo2025Sim2ValLeveraging,luo2026X4ValLearning, badithela2025suresim}.
\citet{angelopoulos2023PredictionPoweredInference} propose prediction-powered inference (PPI) that uses empirical risk minimization to correct the model's systematic bias. While its fitted rectifier can capture some local structure, we model potential residual correlations explicitly and update each prediction's confidence through its posterior. PPI instead provides a global confidence bound for its bias correction. Greater model capacity can reduce approximation error but may increase estimation variability, motivating prior refitting as human feedback accumulates.

\textbf{Conformal calibration.}\label{rmk:cali}
Sharing the goal of model-agnostic risk control, cross-conformal prediction \citep{vovk2012CrossconformalPredictors} and Jackknife+ \citep{barber2021PredictiveInference} improve data efficiency over split conformal prediction \citep{papadopoulos2002InductiveConfidence,lei2018DistributionFreePredictive,jung2025Trustescalate}, which splits the available human feedback into disjoint sets for training and calibration.
To correct known or tractable covariate shifts, \citet{tibshirani2020ConformalPrediction} introduce \emph{weighted exchangeability}, and \citet{prinster2022JAWSAuditing} apply it to obtain weighted Jackknife+.
These methods do not directly handle the history-dependent covariate shifts induced by active acquisition in our setting, especially during the online period, when the weight calculation is intractable.
Online model adaptation adds another layer of distribution shift.
Existing online conformal methods \citep{gibbs2021AdaptiveConformal,gibbs2022ConformalInference,bhatnagar2023ImprovedOnline,feldman2023AchievingRisk} often treat distribution shifts as adversarial and focus on time-averaged regret or coverage.
Our goal is risk control at every time step without conservative fallbacks: calibration should track model improvement and reduce escalation as the model learns.

\subsection{Problem setup} \label{sec:setup}

We instantiate human--AI collaborative decision making in binary classification.
Consider an unlabeled dataset $(X_{t})_{t\ge 1}$ from the feature space $\X$, with corresponding human oracle labels $(Y_{t})_{t\ge 1}$ from the label space $\Y=\{0,1\}$, revealed upon query escalation; we write $Z_{t}=(X_{t},Y_{t})$ for the full data.
We assume the dataset is drawn from an i.i.d. stream and the human oracle provides correct labels.
The inexpensive AI model provides a probabilistic classifier with an attached \emph{confidence} score: $(p_A, c_{A}): \X \mapsto [0,1]^{2}$, which is not assumed to be accurate or calibrated.

Given these resources, we consider an \emph{end-to-end} task: a framework must use them for all required stages, including but not limited to training, validation, calibration, and deployment. Any labels needed at these stages must be obtained through human queries.
We consider two metrics: \emph{risk} and \emph{cost}.
For a loss function $L$, e.g., zero--one loss $L_{01}: \mathcal{Y}^{2}\to \R_{\ge 0}, (\hat y,y)\mapsto\Ind\{\hat y\neq y\}$ and Brier score $L_{2}: [0,1]^{2} \to \R_{\ge 0}, (\hat{p},y)\mapsto (\hat{p}-y)^{2}$, and a classifier $f$, the risk is $R_{L}(f)=\EE[L(f(X),Y)]$.
Our default risk is the zero--one risk, i.e., the misclassification probability $R_{01}(f)=P(f(X)\neq Y)$.
Cost is measured by the human query complexity, i.e., the end-to-end total number of human queries.
We say a classifier is \emph{safe} if its risk is below a user-specified tolerance $\epsilon\in(0,1)$.
We say an algorithm is more (cost-)\emph{efficient} than another if it has lower human query complexity.
Distinguish this from computational efficiency.
Our goal is to maximize efficiency subject to the safety constraint.

\section{Careful Judge: A framework for risk-controlled adaptive AI-assisted decision making} \label{sec:decomp}

We motivate our method by examining the risk of an AI decision.

\begin{proposition}[Risk decomposition; informal]\label{prop:decomp}
	The AI model's Brier risk has an exact decomposition:
	\[
		R_{2,x}(p_{A}) =& \EE_{Y}[(p_A(X)-Y)^2\given X=x] \\
		=&  \underset{\substack{\ds\updownarrow\\ \ds\text{escalation}}}{\text{aleatoric ambiguity}}
		+ \underset{\substack{\ds\updownarrow\\ \ds\text{calibration}}}{\text{excess risk}}
		+ \underset{\substack{\ds \updownarrow\\ \ds\text{adaptation}}}{\text{epistemic uncertainty}}
		+ \underset{\substack{\ds\updownarrow\\ \ds\text{rectification}}}{\text{systematic bias}}
		.\]
\end{proposition}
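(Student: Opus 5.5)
The decomposition is the bias--variance identity for the Brier score, applied twice, with a Bayesian (posterior) model of the human label probability inserted between the AI prediction and the truth. Let $\eta(x)=P(Y=1\given X=x)$ denote the true human label probability. Let $\hat p(x)$ be the rectified prediction, i.e., the posterior mean of $\eta(x)$ under the residual model $\eta = p_A + g$ with a Gaussian-process prior on the residual $g$ given past human feedback. Let $\tilde\eta(x)$ denote the random posterior draw of $\eta(x)$ for the same query $x$. Since the formal version is only promised later, I will fix this formalization: the four terms are
\[
\eta(1-\eta),\qquad (\hat p-\eta)^2-\mathrm{Var}(\tilde\eta),\qquad \mathrm{Var}(\tilde\eta),\qquad (p_A-\hat p)^2+2(p_A-\hat p)(\hat p-\eta),
\]
and I will then check that they sum exactly to the Brier risk.

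\textbf{Step 1 (aleatoric split).} Condition on $X=x$, where $Y\sim\mathrm{Bernoulli}(\eta(x))$. Expanding $(p_A-Y)^2=(p_A-\eta+\eta-Y)^2$ and using $\EE[Y-\eta\given X=x]=0$ kills the cross term. This gives
\[
R_{2,x}(p_A)=\eta(x)(1-\eta(x))+(p_A(x)-\eta(x))^2 .
\]
The first term is the irreducible ambiguity. It is the part only escalation can remove, since no model can predict $Y$ better than $\eta$ at an ambiguous $x$.

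\textbf{Step 2 (split the reducible error).} Write $p_A-\eta=(p_A-\hat p)+(\hat p-\eta)$ and expand the square. The term $(p_A-\hat p)^2$ is the systematic bias that the rectifier learns and removes; it vanishes once $p_A$ is replaced by $\hat p$. The term $(\hat p-\eta)^2$ is the error of the rectified model. I then add and subtract $\mathrm{Var}(\tilde\eta)$, the posterior variance. This variance is the epistemic uncertainty, which adaptive querying (Active Max) shrinks. The leftover $(\hat p-\eta)^2-\mathrm{Var}(\tilde\eta)$ is the excess risk that the model's own uncertainty does not account for, and this is what calibration must control model-agnostically.

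\textbf{Main obstacle.} The hard part is exactness together with interpretability. The cross term $2(p_A-\hat p)(\hat p-\eta)$ is not zero pointwise, so the statement is exact only if it is kept inside the bias term, as I have done above. If one instead wants it to vanish, one needs an additional assumption: take expectation over the posterior with $\eta$ treated as distributed as $\tilde\eta$, so that $\EE[\hat p-\tilde\eta]=0$. I would first try to present the identity pointwise with the cross term absorbed into the bias term. I would then remark that under a well-specified prior, averaging over the posterior makes the cross term and the excess-risk term vanish in expectation. This is what justifies pairing each term with its module: calibration handles the excess term exactly when the Bayesian model is misspecified.
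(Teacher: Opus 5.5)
Your identity holds, but only as a tautology, so it does not establish the decomposition the proposition is about. Everything after Step 1 is add-and-subtract. Your four terms sum to $\eta(1-\eta)+(p_A-\eta)^2$ for \emph{any} $\hat p$, and with \emph{any} quantity in place of $\mathrm{Var}(\tilde\eta)$. Your bias term simplifies to $(p_A-\eta)^2-(\hat p-\eta)^2$, which is negative whenever $p_A(x)=\eta(x)\neq\hat p(x)$. Your excess term is negative whenever $\hat p(x)=\eta(x)$ and the posterior variance is positive. So the pieces are not separate sources of risk. The formal version in the appendix has real content: each term is a nonnegative squared deviation, and every cross term vanishes exactly. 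With $\eta$ deterministic you cannot get this. As you observe yourself, $\hat p$ is not a conditional expectation of the fixed $\eta$, so $2(p_A-\hat p)(\hat p-\eta)$ has no reason to vanish.

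The missing idea is to make the human target itself random. You only gesture at this as an extra assumption; in the paper it is the setup. Human preference $H$ is random, with $p_H(x)=P(Y=1\mid X=x,H)$, and the training set $D$ is drawn from $P_X P_{Y\mid X,H}$ given $H$. The learned predictor is the posterior mean $p_D=\mathbb{E}_H[p_H\mid D]$. Its training-marginal counterpart is $p_B=\mathbb{E}_D[p_D]$, which equals the prior mean $\mathbb{E}_H[p_H]$. Because $Y\to p_H\to p_D\to p_B$ are successive expectations, the tower property kills all three cross terms, giving $R_{2,x}(p_A)=\mathbb{E}_H[p_H(1-p_H)]+\mathbb{E}_{D,H}[(p_H-p_D)^2]+\mathbb{E}_D[(p_D-p_B)^2]+(p_B-p_A)^2$.

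Your Step 1 plus your proposed posterior averaging reproduce only the first two steps of this chain, so they give three terms, not four. Moreover, under your well-specified reading, $\mathrm{Var}(\tilde\eta)$ averaged over $D$ is exactly the paper's \emph{excess} term $\mathbb{E}_{D,H}[(p_H-p_D)^2]$, not its epistemic term. The epistemic term $\mathbb{E}_D[(p_D-p_B)^2]$ measures how the learned predictor varies across training sets, which needs the second layer of randomness over $D$. Excess and epistemic together are the two halves of the law of total variance for $\mathrm{Var}_H(p_H(x))$ conditioned on $D$. Accordingly, the bias is measured against the data-independent $p_B$, not the data-dependent $\hat p$.
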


The formal statement and proof are given in \cref{apx:decomp}.
\Cref{prop:decomp} identifies four sources of risk to be addressed to align the AI model with human preferences and deploy it safely:
\begin{itemize}
	\item Aleatoric ambiguity encodes the intrinsic hardness of the query, which is the irreducible risk of the Bayes-optimal human-aligned predictor. 	      When this risk exceeds the tolerance, we \emph{escalate} the query to the human oracle to reveal the correct zero-loss decision.
	\item Excess risk measures the gap between a learned predictor and the Bayes-optimal predictor due to finite data and model misspecification. We \emph{calibrate} the escalation rule to control the ensemble's risk despite imperfect learning.
	\item Epistemic uncertainty captures training-sample-specific information, including the correlation structure revealed by the queried data.
				We \emph{adapt} to this structure and prioritize informative queries that can improve future decisions.
	\item Systematic bias is the population-level misalignment of the AI model	      . We \emph{rectify} AI predictions by learning to correct this bias from human feedback.
\end{itemize}

This motivates calibrated adaptive rectification and escalation (\method), a pipeline that reuses every queried human label to adaptively rectify AI's systematic bias and epistemic uncertainty, calibrate the rectified AI model, and escalate to human when the rectified AI decision is still risky. This pipeline improves efficiency while controlling the risk of the resulting ensemble.
Formally, human queries inform a rectifier $r$ in the \emph{score} space (the range of a strictly increasing link function $g\colon [0,1]\to \bar{\R}$), an uncertainty-reduced confidence measure $c$, a calibrated escalation threshold $\lambda$, and an active acquisition function $\pi$. At time step $t$, these quantities assemble into a
probabilistic classifier:
\[\label{eq:care}
	\begin{cases}
		p_t({X_{t}}) \coloneqq g^{-1}(g(p_{A}(X_{t})) + {r_{t}}(X_{t})), & \text{if } {c_{t}}(X_{t}) > {\lambda _t}(X_t) \text{ and } 0=A_{t} \sim \operatorname{Bern}(\pi_{t}(X_{t})); \\
		\mathbf{1}\{ Y_{t}=1 \},                  & \text{otherwise.}
	\end{cases}
\]
In words, the rectified AI decision is accepted only when its confidence exceeds the calibrated threshold and the input is not selected for active acquisition. Otherwise, the human oracle supplies the final decision, and the queried label becomes available for subsequent learning and calibration. \Cref{fig:pipe} illustrates this decision flow.

\method is modular, as one can plug in any rectifier $r$, confidence measure $c$, escalation threshold $\lambda$, and acquisition function $\pi$ into the pipeline.
Nonetheless, in the next four sections, we develop these four modules jointly to pursue:

\begin{itemize}
	\item \emph{Safety.} Adaptive rectification continuously improves AI predictions, while adaptive calibration and escalation provide model-agnostic risk control as the model evolves.
	\item \emph{Efficiency.} We reuse each human query to update rectifier, quantify uncertainty, and calibrate the escalation threshold, maximizing query efficiency.
		Active acquisition and adaptive learning aim to reduce future escalation frequency.
\end{itemize}

\begin{remark}
	\Cref{prop:decomp} offers a triage of gaps left by existing tools.
	Classical selective classification \citep{el-yaniv2010FoundationsNoisefree} and calibrated threshold-based escalation \citep{jung2025Trustescalate} do not learn to improve the base model, so a biased or uncertain model may escalate every query under a small risk tolerance, leading to huge cost.
	Population-level rectifier regression \citep{angelopoulos2023PredictionPoweredInference} targets systematic bias, but its uncertainty quantification does not adapt to locally revealed correlation structure.
	Finally, active acquisition \citep{zrnic2026ActiveStatistical} can target epistemically informative points, but does not by itself learn a rectifier or an acquisition function, nor does it certify the risk of accepted decisions.
	\method closes all four gaps to achieve safety efficiently.
\end{remark}

Furthermore, we design \method to support flexible offline--online model lifecycles. At offline checkpoints, \method revisits accumulated human feedback in an order-invariant manner, refits rectification and rebuilds calibration, prioritizing query efficiency by making fuller use of queried labels. Between checkpoints, \method processes samples sequentially and updates rectification and calibration online.

\section{Residual Gauss: Locality-aware Bayesian rectification} \label{sec:rect}

To close reducible gaps in \cref{prop:decomp}, we use queries to learn the human--AI residual and rectify AI's decision to align with human preference.
AI's uncertainty about human preference can be correlated across the input space.
For example, an autonomous driving system that misjudges a pedestrian's intention at one intersection may make similar errors at similar intersections \citep{sinha2024RateInformedDiscovery}.
These correlations allow feedback on one query to improve decisions on related inputs, improving alignment---and thereby reducing escalation frequency---over time.
On the other hand, treating residuals as independent can lead to a query-all degradation when the initial uncertainty level is above the risk tolerance.

Therefore, \cref{prop:decomp} suggests a learning model to capture both the global systematic bias and local epistemic uncertainty of the human--AI residual.
We model this with a Gaussian process (GP) \citep{rasmussen2008GaussianProcesses}.
On the score scale of the link $g$, we place a $\mathrm{GP}(m,k)$ prior on the residual $r(x)=g(p_H(x))-g(p_{A}(x))$, where $p_H$ is the Bayes-optimal human-aligned predictor, and use human feedback to obtain the Bayesian rectifier $r_t$.
The mean $m$ captures global systematic bias, while the kernel $k$ captures correlations that let feedback transfer across inputs.

\textbf{Initialization: parametric prior regression.}
AI's raw confidence $c_{A}$ can offer useful information about uncertainty, even when it is not calibrated.
We incorporate it into a learnable GP prior:
\[
	r\sim\GP(m_\beta, k_{\tau,\rho,\phi}),
	\quad\text{with}\quad
	m_\beta(x)=x^T\beta,\quad
	k_{\tau,\rho,\phi}(x,x') = \tau^{2} (\rho k_{\phi}(x,x') + (1-\rho) k_A(x,x'))
	,\]
where the linear mean captures systematic bias, parameterized kernel $k_\phi$ captures residual correlations, and $k_A$ encodes the AI's uncertainty, such as an ensemble's empirical covariance or a diagonal kernel based on $c_A$.%
\footnote{We normalize $k_\phi$ and $k_A$ to have comparable scale. For simplicity, we use the same feature space $\X$ for the bias functional and both kernels.}
When $\tau\to 0$, our model reduces to the parametric linear regression model in \citet{angelopoulos2023PredictionPoweredInference}.
We refit the prior parameters at offline checkpoints, using, e.g., maximum likelihood estimation and expectation maximization.
This adaptive prior lets human feedback determine how much to trust the AI's confidence and how much additional residual structure to learn.

\textbf{Offline: nonparametric posterior inference.}
Given the queried labels, the GP posterior applies corrections at each input.
At time step $t$, let $R_t$ follow the (approximate) posterior distribution of the residual. The rectified posterior predictive probability $P_t = g^{-1}(g(p_A) + R_t)$ gives a rectified point prediction $p_t$ as its posterior mean.
We also rectify the confidence using the posterior risk estimate:
\[
	1-c_t = \mathbb{E}_{Y \!\sim\operatorname{Bern}(P_t)}[L(p_t, Y)]
	= \mathbb{E}_{P_t}[P_t L(p_t,1) + (1-P_t) L(p_t,0)]
	= p_t L(p_t,1) + (1-p_t) L(p_t,0)
	.\]
Thus, in binary classification, the same posterior mean provides both the rectified prediction and confidence measure.
For example, $c_t = \max\{p_t, 1-p_t\}$ for the 0--1 loss and $c_t = 1 - p_t(1-p_t)$ for the Brier loss.
Since binary labels give a non-Gaussian posterior, we approximate it using expectation propagation (EP) or a Laplace approximation \citep{rasmussen2008GaussianProcesses}.

\textbf{Online: assumed-density filtering.} \label{sec:adf.}
Between offline checkpoints, we keep the prior fixed and update the residual posterior after each instance of actively acquired human feedback using assumed-density filtering (ADF), the online counterpart of EP.
ADF obtains the Gaussian approximation of $R_{t+1}$ by matching its first two moments with those of the conditional posterior tilted by the new observation.
Each update requires only a one-dimensional integral and a rank-one covariance correction; the probit link gives closed-form updates.
An online maximum a posteriori (MAP) update provides an alternative online counterpart to the Laplace approximation. See \cref{apx:rect} for more details.

When the GP model is well-specified and the human--AI residual is small, in the sense of the $L^{2}(P_{X})$ norm, our rectified inference enjoys the variance reduction standard in model-assisted inference \citep{angelopoulos2023PredictionPoweredInference,zrnic2026ActiveStatistical,luo2025Sim2ValLeveraging}.
This formally characterizes the efficiency gain from using an aligned AI model.

\section{Confident Jack: Calibrating adaptive escalation and learning} \label{sec:cali}

Rectification alone does not guarantee safety; we need an adaptive safety check that keeps pace with rectifier learning.
We develop \emph{Confident Jack} to control the risk of trusting the rectified model at any fixed time, regardless of the underlying AI model or rectifier.
It uses the same queried labels used for rectifier learning for calibration, while accounting for how those labels were adaptively selected.

Confident Jack is a \emph{co-evolving weight-score particle system} $\{ (w_t^{(k)}, s_t^{(k)}  ) \}_{k=1}^{K}$ that attends to the test input $X_t=x$, giving a prediction set of the following form for any risk tolerance $\epsilon\in(0,1)$:
\[\label{eq:pred-set}
	\textstyle\calC_{t,\epsilon}(x)
	=\left\{y\in\Y:\sum_{k=1}^{K}w_t^{(k)}(x,y)\cdot \Ind\left\{s_t^{(k)} (z^{(k)})<s_t^{(k)} (x,y)\right\}<1-\epsilon/2\right\}
	,\]
where $\{ z^{(k)} \}$ are past human queries to be specified together with the particle system.
Intuitively, scores help check if a hypothetical label $y$ is consistent with past queries, which may have different relevance to the current test point due to selective escalation and adaptive model updates, and thus are weighted accordingly.
The prediction set retains labels that pass this weighted consistency check.
This prediction set is translated into a confidence threshold for escalation in \cref{sec:esc}.

\textbf{Initialization: confidence-based nonconformity score.}
To incorporate the model $\M_t$'s own uncertainty estimation, we define the score using its rectified confidence:
\[
	s(x,y;\M_t) \coloneqq c_t(x)\Ind\{f_t(x)\neq y\},
	\label{eq:nonconformity}
\]
where $f_t(x) = \Ind\{p_t(x)\ge 1 /2\}$ is the rectified model prediction.
The score is formally derived as the generalized inverse of the decision set of \cref{eq:care} in \cref{apx:cali}.
Intuitively, when $y$ is the true label, a confident mistake receives a higher nonconformity score than an uncertain one.
With $c_t(x)=\max\{p_t(x),1-p_t(x)\}$ in binary classification, this recovers the prediction-based score in \citet{romano2020ClassificationValid}.
Our construction is tailored to our escalation rule and accommodates model-provided confidence measures.

\textbf{Offline: weighted cross-conformal calibration.}
A score is associated with a model.
Prioritizing query efficiency, Confident Jack constructs these scores by fitting leave-one-fold-out models $\{ \M_{t}^{(k)} \}_{k=1}^{K}$.
Each queried label can then support learning in some folds and calibration in another, avoiding a permanently separate calibration set. %
Let $s^{(k)}_t \coloneqq s(\cdot ; \M_t^{(k)})$ and let $z^{(k)}$ be a designated point in the $k$th left-out dataset.
With leave-one-out folds, this recovers leave-one-out cross-conformal prediction \citep{romano2020ClassificationValid} and Jackknife+ \citep{barber2021PredictiveInference} for binary classification.

The weights restore the symmetry lost through adaptation (history-dependent selective escalation and model improvement).
For a candidate $(x,y)$, a weight answers how likely each held-out query would be to occupy the test slot when we do not know the order of the offline dataset plus the test point.
We find this probability by \emph{replaying} the logged acquisition policy under possible permutations of the held-out groups and the test point.
Let $G$ be the allowed permutations and $\Lambda_t^\sigma(x,y)$ the likelihood of the observed acquisition decisions under permutation $\sigma\in G$; the required restrictions on $G$ and the replay are detailed in \cref{apx:cali}.
Then, the exact conditional probability that the designated point $z^{(k)}$ occupies the test slot is
\[
	w_{t}^{(k)}(x,y) \propto \textstyle\sum_{\{\sigma\in G:\sigma(k)=K+1\}}\Lambda_t^\sigma(x,y) \qquad k=1,\dots,K,
	\label{eq:off-weight}
\]
where $\sigma(k)=K+1$ places the $k$th held-out group in the test slot $K+1$.
With a fixed, time-homogeneous query propensity $\pi_t(x)\equiv\pi(x)$, these become the usual normalized inverse-propensity weights $w_{t}^{(k)}(x)\propto1/\pi(x^{(k)})$ \citep{tibshirani2020ConformalPrediction,prinster2022JAWSAuditing}.

\textbf{Online: co-evolving calibration particles.}
We shift the focus to computational efficiency in the online period.
Replaying every possible history in \cref{eq:off-weight} would be costly online.
We thus restrict the replay to the offline period while fixing the observed order of the subsequent online stream.
Since the online period is order-invariant with respect to the offline dataset, we can factor the weight as:
\[
	w_{t}^{(k)}(x,y) \propto& w_{t_0}^{(k)} (x,y) \cdot \Lambda_{t_0:t}^{(k)}
	\label{eq:online-weight}
	,\]
where $t_0$ is the last offline checkpoint and $\Lambda_{t_0:t}^{(k)}$ is the likelihood of the observed online acquisition decisions for the $k$th held-out model; the derivation is deferred to \cref{apx:cali}.
This factorization places the main computational burden at the offline checkpoint to obtain $\{ w^{(k)}_{t_0} \}$.
During the online period, the particle system $\{ \M_{t}^{(k)} \}$ co-evolves with the main model $\M_t$ using the same online stream, giving adaptive scores as in \cref{eq:nonconformity} and efficiently maintaining the weights by appending the online likelihood as in \cref{eq:online-weight}.
A trade-off is that new online feedback improves the models immediately but joins the pool of held-out calibration points only at the next offline checkpoint.
Calibration therefore continues to use the offline feedback while its scores and weights adapt to the current model.

The co-evolution avoids treating model adaptation as adversarial drift.
For example, as the model particles improve, nonconformity scores on past feedback become lower, making it easier for \cref{eq:pred-set} to reject inconsistent test labels, yielding less conservative prediction sets and thus fewer escalations.

\begin{theorem}[Adaptive risk control]\label{thm:cali}
	Let $\calC_{t,\epsilon}$ be the prediction set defined in \cref{eq:pred-set} with the co-evolving particle system $\{ (w_t^{(k)}, s_t^{(k)}  ) \}_{k=1}^{K}$.
	Then, for any time $t$ (either an offline checkpoint or an online step) and any user-specified risk tolerance $\epsilon\in(0,1)$, we have
	\[
		P(Y_{t}\notin \calC_{t,\epsilon}(X_{t}))\le \epsilon.
	\]
\end{theorem}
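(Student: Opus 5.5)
The plan is to reduce \cref{thm:cali} to a weighted-exchangeability statement about the $K+1$ ``slots'' (the $K$ held-out groups plus the test point), and then run the standard Jackknife+/cross-conformal comparison-matrix argument, whose factor-of-two loss explains the threshold $1-\epsilon/2$ in \cref{eq:pred-set}. First I condition on the multiset of the offline data together with the test point, i.e.\ on the unordered collection $\{Z^{(1)},\dots,Z^{(K)},Z_t\}$ (grouped by folds). Since the stream is i.i.d., every assignment of groups to slots has equal prior probability. The only source of asymmetry is the logged acquisition decisions, through the likelihood $\Lambda^\sigma_t$. By Bayes' rule, the conditional probability that the designated point of group $k$ sits in the test slot is then exactly \cref{eq:off-weight}, evaluated at $(x,y)=(X_t,Y_t)$. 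This is the key step. The restrictions on $G$ from \cref{apx:cali} are what make $G$ a group, so that the replay likelihood is well defined for each relabeling; I would invoke them here. Conditionally on the multiset, the test slot's identity is then drawn from the distribution $w_t^{(\cdot)}$, extended by a test-point weight $w^{(K+1)}$, which I would keep on the right-hand side of \cref{eq:pred-set} via the usual ``$+w^{(K+1)}\delta$'' convention so that the coverage bound goes the right way.

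For the online steps I use the factorization \cref{eq:online-weight}. The online stream after $t_0$ is held in its observed order and is not permuted. The extra acquisition decisions then contribute, for each hypothesis ``group $k$ was the held-out/test group'', a multiplicative likelihood $\Lambda^{(k)}_{t_0:t}$ computed with the co-evolved particle $\M_t^{(k)}$. Conditioning on the online history therefore multiplies the offline posterior by this factor, and the same conditional-slot distribution holds at every online $t$. The scores $s_t^{(k)}$ are likewise functions of the multiset and the online history only, through leave-one-fold-out models that never see their own held-out group. This gives the symmetry needed for the comparison argument.

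Next comes the comparison-matrix step, following \citet{barber2021PredictiveInference}. I define the tournament $A_{ij}=\Ind\{s^{(i)}(z_j)>s^{(j)}(z_i)\}$ over all $K+1$ slots, using the models $\M^{(i,j)}$ that leave out both slots, so that each $s^{(k)}(z^{(k)})$ with the test point removed matches the particle used in \cref{eq:pred-set}. Miscoverage of $Y_t$ means the test slot $j$ satisfies $\sum_i w^{(i)}A_{ij}\ge1-\epsilon/2$, and I call such a slot ``strange''. A weighted counting argument bounds the weighted mass of strange slots by $\epsilon$: each pair contributes at most its weight product once, so twice the strange mass times $(1-\epsilon/2)$ is at most the strange mass squared plus its complement's contribution, i.e.\ the weighted version of Barber et al.'s ``at most $2\alpha(n+1)$ strange points''. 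Combined with the previous paragraph, the conditional probability that the test slot is strange is at most $\epsilon$, and taking expectations gives the marginal bound.

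I expect the main obstacle to be the first step in the online regime: verifying that the co-evolving particles and the frozen online order really leave the conditional slot distribution equal to $w_{t_0}^{(k)}\Lambda^{(k)}_{t_0:t}$. The danger is that online updates might depend on which group is held out in a way not captured by the likelihood. I would first try to show that the online acquisition $\pi_s$ is a deterministic function of the current particle state and that each particle's update depends only on the online stream and its own training folds, so that Bayes' rule applies slot by slot. A secondary difficulty is extending the Jackknife+ pair-model trick to weights that depend on $(x,y)$. If that fails, I would accept the pairwise-dropped models only as an analysis device, as in \citet{prinster2022JAWSAuditing}.
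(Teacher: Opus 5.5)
Your proposal is correct and is essentially the paper's proof. The shared steps are: condition on everything except the slot-to-group assignment within $G$; get the weights by Bayes' rule from the replayed acquisition likelihood, with the online factor appended and the test-group weight kept in the normalization; use invariance of the learner under $G$ so the leave-two-out score array $s^{-(k,k')}$ is only relabeled; and finish with the weighted Jackknife+ strange-set bound $u(1-\epsilon/2)\le \frac{1}{2}u^2+u(1-u)$, i.e.\ $u\le\epsilon$.

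One precision on your ``main obstacle'': in the paper's appendix the online factor for group $k$ is the acquisition likelihood replayed from the history in which $z^{(k)}$ and the test point are swapped. That is a model that does see the test point, not the leave-out particle $\M_t^{(k)}$. It factors out because every $\sigma$ with $\sigma(k)=K+1$ yields the same offline set $(D_0\cup\{z\})\setminus\{z^{(k)}\}$ while the online stream is held fixed. Relatedly, the restrictions on $G$ are what make the learner invariant so the score array is fixed under the conditioning; they are not there merely to make $G$ a group.
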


\section{Exacting Gates: Safety escalation threshold on confidence} \label{sec:esc}

To get fast and simple escalation decisions, we translate the prediction set \cref{eq:pred-set} into confidence gates that adapt in both time and space, as used in \cref{eq:care}.
We have two observations:
\begin{enumerate*}[label=(\roman*)]
	\item stabilized learning closes the gap between the main model and the leave-one-out particles, and
	\item the exclusion of the opposite label $1-f_t(x)$ from \cref{eq:pred-set} indicates that it's \emph{safe} to accept the model's chosen label $f_t(x)$, although it does not guarantee the inclusion of $f_t(x)$ (proved in \cref{apx:esc}).
\end{enumerate*}
Therefore, to test if the model is confident enough to reject its opposite label, the threshold is set to
\[\label{eq:lambda}
	\textstyle\lambda_{t}(x) \coloneqq \q_{\frac{1-\epsilon /2}{W_{t}(x)}}\left(\sum_{k\in \mathcal{W}_{t}(x)} \frac{w_{t}^{(k)} (x)}{W_{t}(x)}\, \delta\{s_{t}^{(k)}(z^{(k)}) \}\right) + \gamma ^{c}_{t}(x)
	,\]
where weights $w^{(k)}_t(x)\coloneqq w^{(k)}_t(x,1-f_t(x))$ concern the opposite label,
$\delta\{x\}$ is the Dirac measure at $x$,
$W_t(x) = \sum_{k\in \mathcal{W}_{t}(x)} w_{t}^{(k)}(x)$ is the mass of the agreeing set $\mathcal{W}_{t}(x)\coloneqq \{ k\in [K]: f_{t}^{(k)} (x) = f_{t} (x) \}$,
and $\gamma ^{c}_{t}(x) = \max_{k\in \mathcal{W}_{t}(x)} \max\{0, c_{t}(x) - c_{t}^{(k)} (x)\}$ inflates the quantile to account for instability.
We use the convention that $\max$ over empty $\mathcal{W}_t(x)$ is zero and $\q_{c}\!=\!\infty$ for $c\!>\!1$.

Threshold \cref{eq:lambda} requires confidence to exceed a disagreement-inflated $(1-\epsilon /2)$-quantile of the agreeing particles' nonconformity scores.
Intuitively, if most particles agree with the current prediction and are mostly correct, then $c_t(x)$ passes the threshold more easily; if few particles agree with the current prediction and those that agree are overconfident, then $c_t(x)$ must be very high to pass the threshold.
Not only is $\lambda_t$ adaptive in time, but it also varies across the input space. For example, it's lower where the particle predictions are more stable (smaller $\gamma^{c}_t$ and larger $W_t$).

\begin{corollary}[Ensemble risk control]\label{cor:cali}
	Let $\hat{Y}_{t}$ be the decision output by \method \cref{eq:care} with the safety escalation threshold in \cref{eq:lambda}.
	Then, for any time $t$ (either an offline checkpoint or an online step) and any user-specified risk tolerance $\epsilon\in(0,1)$, we have $R_{01}(\hat{Y}_t) = P(\hat{Y}_{t}\neq Y_{t})\le \epsilon$.
\end{corollary}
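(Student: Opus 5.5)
The plan is to reduce \cref{cor:cali} to \cref{thm:cali} through a deterministic inclusion. The key event inclusion is
\[
\{\hat Y_t\neq Y_t\}\subseteq\{Y_t\notin\calC_{t,\epsilon}(X_t)\}.
\]
Once it holds pointwise, taking probabilities and invoking \cref{thm:cali} gives $P(\hat Y_t\neq Y_t)\le\epsilon$ at every offline checkpoint and online step.

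First, dispose of the escalated branch of \cref{eq:care}. If $c_t(X_t)\le\lambda_t(X_t)$ or $A_t=1$, the human supplies $\hat Y_t=Y_t$, so no error occurs. The active acquisition draw $A_t$ only ever sends more mass to this zero-loss branch, so it can be ignored.

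In the accepted branch, $\hat Y_t=f_t(X_t)$. An error means $Y_t=1-f_t(X_t)$. It therefore suffices to show the implication
\[
c_t(x)>\lambda_t(x)\;\Longrightarrow\;1-f_t(x)\notin\calC_{t,\epsilon}(x).
\]
Given this, an error forces $Y_t=1-f_t(X_t)\notin\calC_{t,\epsilon}(X_t)$, which is exactly observation (ii) before \cref{eq:lambda}.

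To prove the implication, set $y=1-f_t(x)$ and check that the weighted count in \cref{eq:pred-set} reaches $1-\epsilon/2$. For each particle $k\in\mathcal W_t(x)$ we have $f_t^{(k)}(x)=f_t(x)\neq y$. By \cref{eq:nonconformity}, $s_t^{(k)}(x,y)=c_t^{(k)}(x)$. The definition of $\gamma^c_t$ gives $c_t^{(k)}(x)\ge c_t(x)-\gamma^c_t(x)$. Combined with the hypothesis $c_t(x)>\lambda_t(x)$, this yields
\[
c_t^{(k)}(x)>\q_{(1-\epsilon/2)/W_t(x)}\Bigl(\textstyle\sum_{k\in\mathcal W_t}\tfrac{w^{(k)}}{W_t}\delta\{s^{(k)}_t(z^{(k)})\}\Bigr)=:q.
\]
Since the threshold is finite, the quantile level is at most $1$, so $W_t(x)\ge 1-\epsilon/2$. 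By the definition of the weighted quantile, the normalized mass of agreeing particles with $s_t^{(k)}(z^{(k)})\le q$ is at least $(1-\epsilon/2)/W_t$. Each such particle satisfies $s_t^{(k)}(z^{(k)})\le q<s_t^{(k)}(x,y)$. Hence the indicator sum in \cref{eq:pred-set} is at least $W_t\cdot(1-\epsilon/2)/W_t=1-\epsilon/2$, and $y$ is excluded. Non-agreeing particles contribute nonnegatively and can be dropped. If $\mathcal W_t(x)$ is empty or the level exceeds $1$, then $\lambda_t=\infty$, the query is always escalated, and the implication is vacuous.

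The main obstacle is the quantile bookkeeping. I must confirm that the weights in \cref{eq:lambda}, namely $w^{(k)}_t(x,1-f_t(x))$, are exactly those used in \cref{eq:pred-set} at $y=1-f_t(x)$, and that they are normalized to sum to one over all $K$ particles, so that the level $(1-\epsilon/2)/W_t$ converts correctly to total mass $1-\epsilon/2$. I must also handle the strict versus non-strict inequality conventions: the strict inequality $c_t>\lambda_t$ must propagate into the strict inequality required by the indicator.
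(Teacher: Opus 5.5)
Your proposal is correct and follows essentially the same route as the paper. Both arguments bound the error by the event of accepting $f_t(X_t)$ when $Y_t=1-f_t(X_t)$, use $c_t^{(k)}\ge c_t-\gamma^c_t$ on the agreeing set to show the opposite label is excluded from $\calC_{t,\epsilon}(X_t)$, and conclude with \cref{thm:cali}. The differences are cosmetic: you do the quantile bookkeeping particle by particle rather than through the paper's $\iota_k$-inflated unnormalized quantile, and your normalization worry is unnecessary, since converting level $(1-\epsilon/2)/W_t$ into mass $1-\epsilon/2$ uses only that $W_t$ is the total weight on $\mathcal W_t(x)$.
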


The proof is deferred to \cref{apx:esc}, where we also relax the weight and score calculations to allow for approximation errors and accommodate computational constraints, offline or online.

\section{Active Max: Active acquisition for long-term efficiency} \label{sec:ada}

An active acquisition module $\pi$ in \cref{eq:care} independent of safety escalation is motivated by two needs.
First, calibration requires replaying the acquisition policy on counterfactual histories to compute the weights in \cref{eq:off-weight}.
If this acquisition policy coincides with the safety escalation policy, which is only known after calibration, then it creates a nested dependency and potentially exponential computational complexity.
Second, active acquisition can potentially improve long-term data efficiency by prioritizing the human queries that can reduce future escalation frequency, which cannot be achieved by the safety escalation rule alone.
This shares the same spirit as active statistical inference \citep{zrnic2026ActiveStatistical}, but with a different objective: instead of maximizing inference precision, we aim to maximize efficiency.

This double escalation mechanism separates the input and output of the calibration particle system and lets human review protect current decisions and guide learning for future ones.
Specifically, between offline checkpoints, actively escalated queries ($1=A_t \sim \operatorname{Bern}(\pi_t(X_t))$) update the rectifier and calibration particles; queries escalated only by the safety threshold ($1=\Ind{c_t(X_t)\le \lambda_t(X_t)}$) resolve the current decision, while their use for learning waits until the next checkpoint.
We refer the reader to \cref{apx:ada} for our active acquisition design.

\section{Experiments} \label{sec:exp}

We evaluate our method across four real-world safety-critical tasks, domains, and data modalities:
collision detection in autonomous driving on the trajectory dataset nuPlan \citep{caesar2022NuPlanclosedloop},
hate speech detection in LLM alignment on the text dataset HateBench \citep{shen2025HateBenchBenchmarking},
semantic safety in robotic tool use on the image dataset ASIMOV \citep{gemini2026roboticsSafety},
and hazard detection in the workplace on the video dataset iSafetyBench \citep{abdullah2025iSafetyBenchvideolanguage}.
Representative queries are shown in \cref{fig:dataset-samples}.
In all tasks, ``AI'' is an inexpensive model and ``human'' is a high-fidelity oracle.
Results are reported over ten random seeds; detailed setups and full results are in \cref{apx:exp}.

\begin{figure}[th]
	\centering
	\begin{subfigure}[t]{0.24\textwidth}
		\centering
		\includegraphics[width=\linewidth]{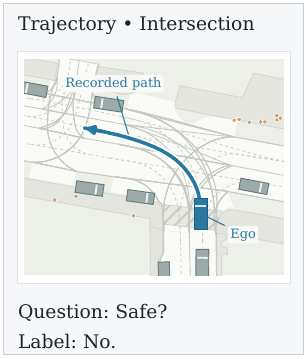}
		\caption{nuPlan}
		\label{fig:sample-nuplan}
	\end{subfigure}\hspace{0.013\textwidth}%
	\begin{subfigure}[t]{0.24\textwidth}
		\centering
		\includegraphics[width=\linewidth]{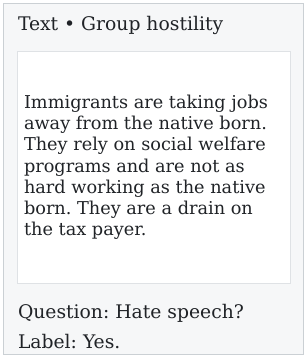}
		\caption{HateBench}
		\label{fig:sample-hatebench}
	\end{subfigure}\hspace{0.013\textwidth}%
	\begin{subfigure}[t]{0.24\textwidth}
		\centering
		\includegraphics[width=\linewidth]{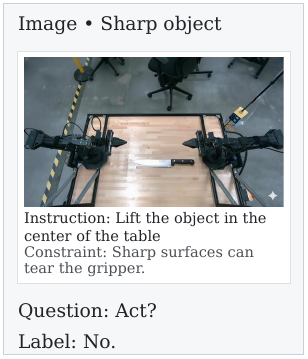}
		\caption{ASIMOV}
		\label{fig:sample-asimov}
	\end{subfigure}\hspace{0.013\textwidth}%
	\begin{subfigure}[t]{0.24\textwidth}
		\centering
		\includegraphics[width=\linewidth]{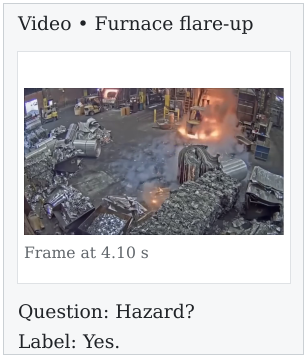}
		\caption{iSafetyBench}
		\label{fig:sample-isafety}
	\end{subfigure}
	\caption{Sample queries and benchmark labels.
		nuPlan shows recorded motion; its label refers to the benchmark simulation result.
		iSafetyBench shows one video frame.}
	\label{fig:dataset-samples}
	\vspace{-0.5\baselineskip}
\end{figure}

We compare with three baselines:
prediction-powered inference (PPI) \citep{angelopoulos2023PredictionPoweredInference}, which learns a linear rectifier without calibration;
selective escalation (SE) \citep{jung2025Trustescalate}, which calibrates AI confidence using conformal prediction without rectification;
and PPI+SE, which combines the two and uses separate data for rectification and calibration.
We also include three ablations: GPC+SE replaces the PPI rectifier in PPI+SE with a GP classification rectifier;
static \method shares offline labels between rectification and calibration without online adaptation;
and \method with PPI replaces our GP rectifier with the PPI rectifier.
The full \method uses GP rectification with an offline--online learning and calibration schedule.

Here we present some representative results demonstrating four properties that are observed consistently across all experiments.

\textbf{Cost efficiency under risk control.}
\method achieves the lowest query complexity while keeping empirical risk below each tested tolerance.
On iSafetyBench (\cref{fig:isafety-risk-sweep}), at the 5\% risk tolerance, \method escalates 21.7\% of queries to humans, compared with 57.1\% for SE, giving a 62\% reduction.

\begin{figure}[t]
	\centering
	\includegraphics[width=0.8\linewidth]{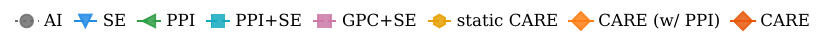}
    \hfill\\
	\begin{subfigure}[t]{0.24\textwidth}
		\centering
		\includegraphics[width=\linewidth]{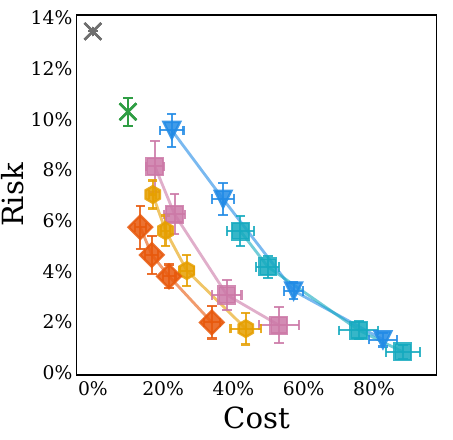}
		\caption{Risk--cost.}
		\label{fig:isafety-risk-sweep}
	\end{subfigure}\hfill
	\begin{subfigure}[t]{0.245\textwidth}
		\centering
		\includegraphics[width=\linewidth]{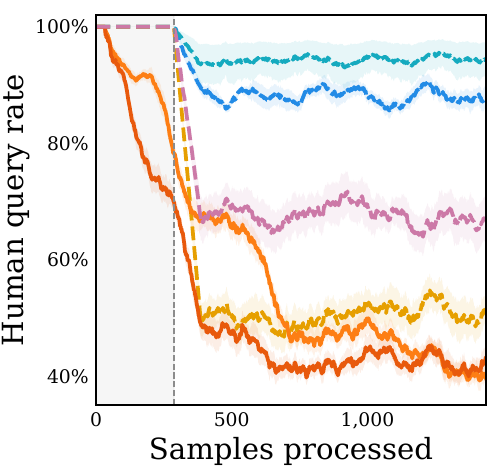}
		\caption{Human query rate.}
		\label{fig:asimov-query-rate}
	\end{subfigure}\hfill
	\begin{subfigure}[t]{0.245\textwidth}
		\centering
		\includegraphics[width=\linewidth]{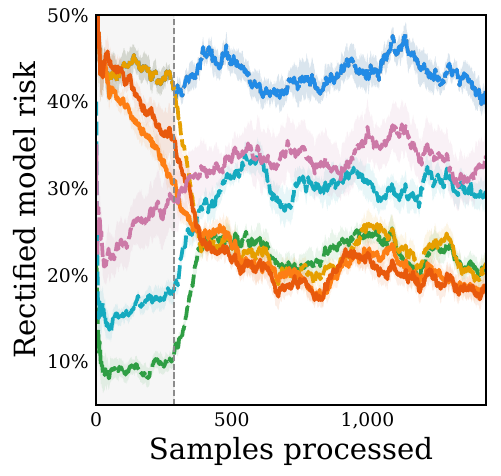}
		\caption{Rectified-model risk.}
		\label{fig:asimov-model-risk}
	\end{subfigure}\hfill
	\begin{subfigure}[t]{0.245\textwidth}
		\centering
		\includegraphics[width=\linewidth]{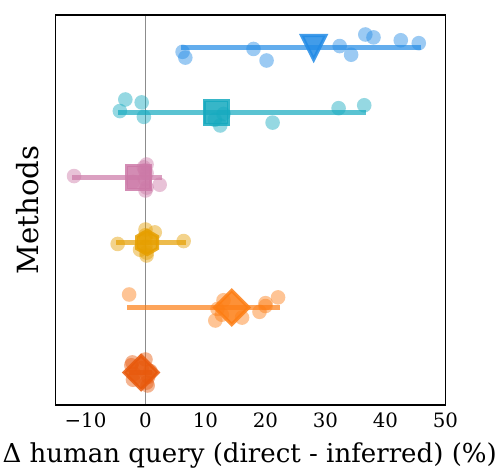}
		\caption{Confidence sensitivity.}
		\label{fig:isafety-confidence}
	\end{subfigure}
	\caption{		\textbf{(a)} iSafetyBench with AI-provided confidence and 10\% preparation data for static methods.
		The $x$-axis is the total human query rate, and the $y$-axis is the empirical mean risk.
		Lines connect nominal risk targets of 2.5\%, 5\%, 7.5\%, and 10\%. Crosses mark risk exceeding the corresponding target.
		A Pareto frontier would lower-bound all methods from the lower left.
		\textbf{(b)--(c)} ASIMOV with inferred confidence, 5\% risk target, and 20\% preparation data for static methods.
		The $x$-axis covers all samples regardless of escalation, and the $y$-axis uses a rolling window of 100 samples.
		\textbf{(d)} iSafetyBench query-rate differences at the 2.5\% risk target and 10\% static preparation. Positive deviations indicate efficiency degradation.
        }
	\label{fig:main-risk-cost}
	\vspace{-0.5\baselineskip}
\end{figure}

\textbf{Fast learning.}\label{sec:exp-asimov.}
\method quickly learns useful residual structure with fewer human queries.
On ASIMOV (\cref{fig:asimov-query-rate,fig:asimov-model-risk}), \method variants achieve significantly higher efficiency through early-stage learning.
Residual Gauss enables faster adaptation than PPI, and Confident Jack overcomes the data inefficiency exhibited by PPI+SE.

\textbf{Local adaptation.} \label{sec:exp-nuplan.}
While calibration controls marginal risk (\cref{thm:cali}), our GP model is designed to adapt locally, powering \method to achieve more balanced performance across the input space.
We examine this behavior across six physically defined regions in nuPlan (\cref{fig:nuplan-regions}).
At the final snapshot, \method achieves the largest model-risk reduction in every region, especially R1 and R2, where the original AI struggles.
Its mean ensemble risk stays below the target in all six regions and is closest to the target among methods below it in R5 and R6.

\begin{figure}[t]
	\centering
	\begin{subfigure}[t]{0.681553\textwidth}
		\vspace{0pt}
		\includegraphics[width=\linewidth]{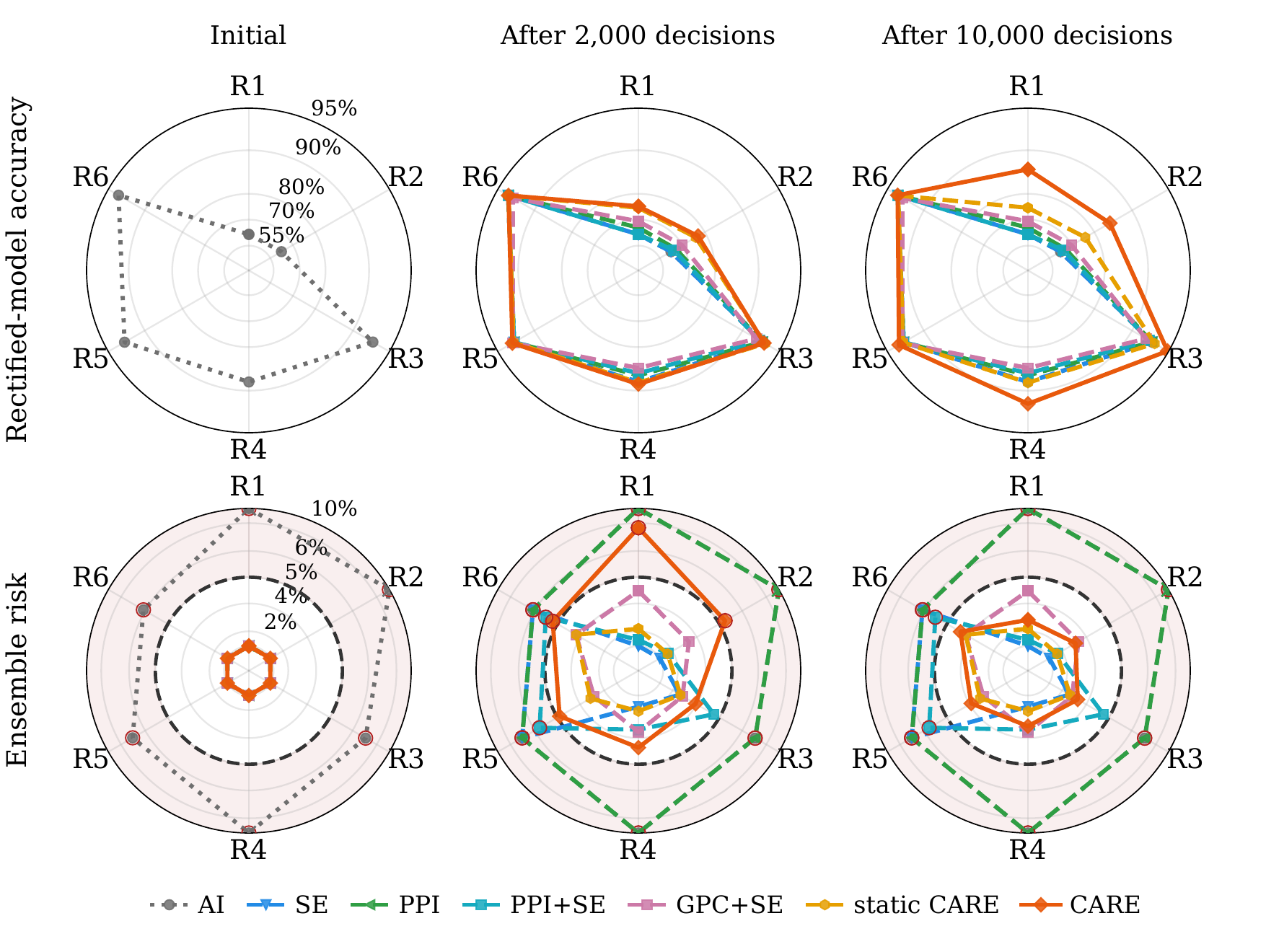}
		\vspace{-0.055\linewidth}
		\caption{Regional adaptation.}
		\label{fig:nuplan-regional-radars}
	\end{subfigure}\hspace{-0.02\textwidth}
	\begin{subfigure}[t]{0.33\textwidth}
		\vspace{0pt}
		\vspace{0.0\linewidth}
		\includegraphics[width=\linewidth]{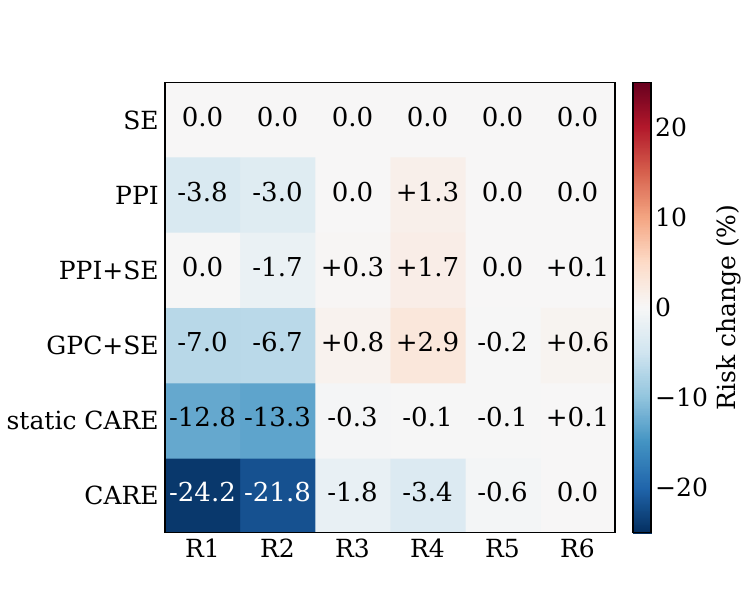}\par\nointerlineskip
		\vspace{-0.085\linewidth}
		\includegraphics[width=\linewidth]{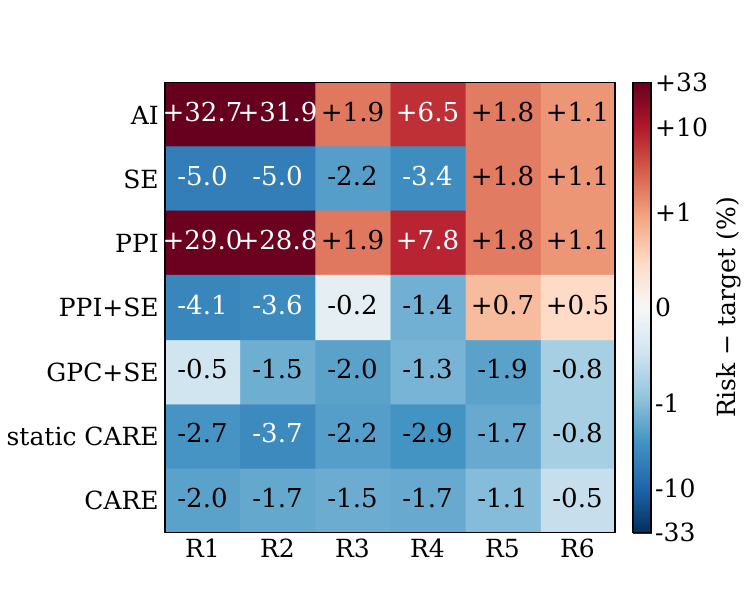}\par\nointerlineskip
		\caption{Final regional differences.}
		\label{fig:nuplan-regional-heatmaps}
	\end{subfigure}
	\caption{Regional model improvement and ensemble risk at the 5\% target, with 20\% preparation data for static methods.
		Radars show successive snapshots; heatmaps summarize the final snapshot on the same held-out panel of 2,000 scenarios.
		The risk target is highlighted as a dashed ring in the lower row, and the red band and outlines indicate exceedance.
		Region definition: open-loop time to collision below 1\,s for R1--R2, finite and at least 1\,s for R3--R4, and infinite for R5--R6; average displacement error below 5\,m for R1, R3, R5, and at least 5\,m for R2, R4, R6.
	}
	\label{fig:nuplan-regions}
\end{figure}

\textbf{Robustness and generality.}
\method performs consistently across different domains, AI models, and confidence measures.
On HateBench (\cref{tab:hatebench-models}), \method reduces human queries and rectified-model error across all eight black-box AI models, whose raw error rates range from 14.5\% to 44.7\%.
Query savings are 49--66\% w.r.t. SE and 26--48\% w.r.t. PPI+SE, with relative error reductions of 15.5--70.5\%.
On iSafetyBench (\cref{fig:isafety-confidence}), \method maintains similar query efficiency under direct (AI-provided) confidence, which is overconfident, and confidence inferred from the AI's predictive probability.
For SE and PPI variants, however, the query rate increases by 28.0 pp and 11.8+ pp when switching from inferred to direct confidence.
\method gains this robustness by also rectifying the confidence measure.

\section{Conclusion}

We introduced \method, an end-to-end framework for safe and efficient human--AI collaborative decision making. By combining model-aware residual learning with model-agnostic adaptive calibration, \method uses human feedback to improve future predictions while maintaining control of the final system's risk.
Experiments on real-world safety-critical tasks demonstrate substantial query savings across domains, AI models, and confidence measures.
Together, these results provide a theoretical and empirical foundation for safeguarding continuously improving AI models towards greater autonomy.

\bibliography{main}
\bibliographystyle{iclr2027_conference}

\newpage
\appendix

\section{Extended experimental setups and results} \label{apx:exp}

\textbf{Decisions, queries, and metrics.}
Each dataset comes with a binary decision task, with a fixed inexpensive predictor as ``AI'', and a benchmark reference label as the correct ``human'' response.
Ensemble risk divides final errors by all processed examples, including queried examples which contribute no error.
Query rate divides all oracle calls by the same denominator and includes initial offline static preparation dataset (training and/or calibration) and online adaptive escalations.
Reusing a purchased label for fitting or calibration does not incur a second query.
Rectified-model risk instead evaluates the predictor before oracle intervention, including examples that are ultimately queried.

\textbf{Paired evaluation.}
Each study uses ten randomized trials, pairing the data stream within trial across methods, risk targets, and preparation budgets. Means and standard errors across the ten trials are reported.
nuPlan and HateBench have reserved evaluation panels for assessing the rectified model.
This evaluation/test dataset does not enter the data stream, is not used for training or calibration, and thus is not counted in the query complexity.
Feature design may use covariate information, but no label information is revealed in this preparation stage.

\textbf{Methods.}
\begin{itemize}
  \item AI uses the original prediction without queries.
	\item PPI fits a linear rectifier on the static preparation data and then predicts without safety escalation.
	\item SE calibrates the original AI confidence.
	\item PPI+SE splits the static preparation data approximately equally between fitting the linear rectifier and calibrating escalation.
	\item GPC+SE replaces the linear rectifier in PPI+SE with a GP classifier, but otherwise uses the same static preparation and calibration.
	\item Static \method uses GP rectification and cross-fitted calibration on the shared static preparation data, then freezes the fitted policy.
	\item (Adaptive) \method does not have static preparation data. It starts with a conservative confidence threshold and learns from labels obtained through escalation.
	\item \method with PPI replaces the GP rectifier in \method with a linear one, but otherwise uses the same adaptive calibration and escalation.
\end{itemize}

\textbf{Prepare--offline--online schedule.}
Static methods reserves a preparation fraction of the entire data stream. This fraction is thus a hyperparameter for static methods, but not for (adaptive) \method.
Offline checkpoints are a hyperparameter for adaptive methods, and follow batches of 32, 64, 128, 256, 512, 1,024, and 2,048 newly queried labels, with the final interval repeated.

\textbf{Gap between theory and practice.}
Active escalation coincides with safety escalation in our experiments.
Computational approximations include sampled counterfactual replays using truncated horizons and logged acquisition decisions, and shorter EP iterations for the particle system.
Reported empirical risk only approximates the marginal risk guaranteed by \cref{thm:cali,cor:cali}.

\Cref{tab:apx-exp-protocol} summarizes the principal settings.

\begin{table}[th]
    \centering\small
    \setlength{\tabcolsep}{4pt}
    \caption{Shared protocol and dataset-specific settings.
			The default operating point uses 20\% preparation and a 5\% risk target.
		}
    \begin{tabular*}{\linewidth}{@{\extracolsep{\fill}}lrrrr@{}}
        \toprule
        & nuPlan & HateBench & iSafetyBench & ASIMOV \\
        \midrule
        Stream dataset size & 10,000 & 7,054 & 1,100 & 1,437 \\
        Reserved test panel & 2,000 & 784 & --- & --- \\
        Rolling-window size & 1,000 & 500 & 100 & 100 \\
        GP feature dimension & 256 & 64 & 256 & 256 \\
        Particle system size & 24 & 5 & 20 & 20 \\
				Preparation fractions (\%) & [10, 20, 30, 40] & [10, 20, 30, 40] & [10, 20, 30, 40] & [10, 20, 30, 40] \\
				Risk targets (\%) & [1, 3, 5, 7.5, 10] & [1, 3, 5, 7.5, 10] & [2.5, 5, 7.5, 10] & [5, 10, 15, 20] \\
        \bottomrule
    \end{tabular*}
    \label{tab:apx-exp-protocol}
\end{table}

\subsection{nuPlan} \label{apx:exp-nuplan}

\textbf{Task and evaluation.}
We use nuPlan driving scenarios \citep{caesar2022NuPlanclosedloop} with its inexpensive ``AI'' predictor as the open-loop simulator and its ``human'' oracle as the closed-loop oracle.
The binary task is to collision prediction, with the label being 1 if the closed-loop simulation has a time-to-collision (TTC) of less than one second.
The ``AI'' predictor's probability of safety is a probit transform of the open-loop TTC margin, with scale 0.5 seconds and probabilities clipped to $[10^{-6},1-10^{-6}]$.

The population contains 49,763 scenarios from six intersection-related scenario types and 63 driving logs.
A fixed, label-blind split reserves 9,953 scenarios, stratified by the six physical regions in \cref{tab:apx-nuplan-regions}; 39,810 remain eligible for streaming.
Each of ten paired trials randomly orders this pool and processes its first 10,000 scenarios.
Regional evaluation uses the same region-stratified panel of 2,000 reserved scenarios for every trial and method, with population region weights used for pooled snapshot metrics.
Driving logs occur in both partitions, so this evaluates within-population adaptation.

\textbf{Feature space.}
The rectifier receives ten engineered open-loop scalars and eight joint principal components of 384 ego-, agent-, and road-embedding coordinates.
The scalar and embedding blocks contribute 80\% and 20\% of normalized feature energy, respectively.
Representations are fitted without labels on each trial's first 2,000 covariates, frozen, and shared across methods.

\textbf{Risk--cost tradeoffs.}
\Cref{fig:apx-nuplan-sweeps,tab:apx-nuplan-grid} show the tradeoff across all five risk targets and four preparation budgets.
At the primary operating point, \method achieves 4.36\% ensemble risk with 26.6\% queries, compared with 50.3\% for SE, 45.3\% for PPI+SE, and 45.8\% for static \method.
These correspond to query reductions of 47\%, 41\%, and 42\%, respectively.
Mean \method risk is below every tested target, while query cost falls from 74.6\% at the 1\% target to 12.4\% at the 10\% target.
Larger static preparation budgets generally reduce ensemble risk but increase total query cost: buying more labels does not ensure that subsequent query savings repay their cost.
Adaptive \method avoids choosing a fixed preparation size and has lower mean query cost than these three comparators throughout the reported grid.

\begin{figure}[th]
    \centering
    \hfil\begin{subfigure}[t]{0.32\linewidth}
        \includegraphics[width=\linewidth]{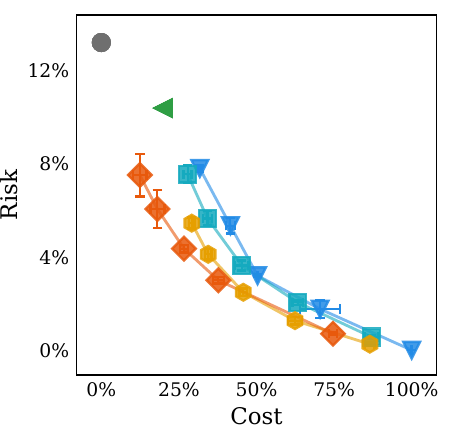}
        \caption{Risk-target sweep.}
    \end{subfigure}\hfil
    \begin{subfigure}[t]{0.32\linewidth}
        \includegraphics[width=\linewidth]{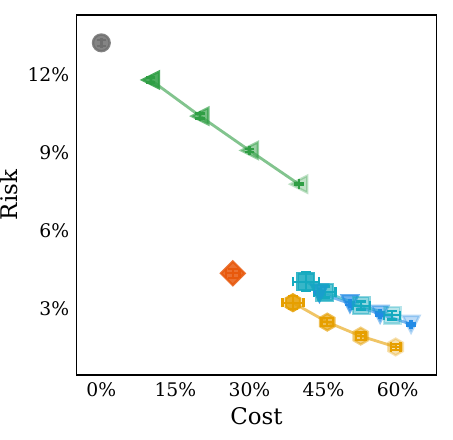}
        \caption{Preparation-budget sweep.}
    \end{subfigure}\hfil
    \par\vspace{2pt}
    \includegraphics[width=\linewidth]{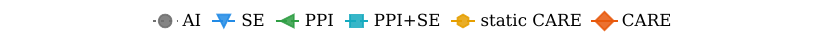}
    \caption{nuPlan risk--cost tradeoffs. (a) Targets of 1\%, 3\%, 5\%, 7.5\%, and 10\%, with 20\% static preparation. (b) 10\%, 20\%, 30\%, and 40\% preparation at the 5\% target, shown with decreasing marker opacity.}
    \label{fig:apx-nuplan-sweeps}
\end{figure}

\begin{table}[t]
\centering
\small
\setlength{\tabcolsep}{5pt}
\caption{nuPlan risk--cost tradeoffs. Ensemble risk / oracle query rate (\%) over the complete target and static-preparation grid.}
\begin{tabular*}{\linewidth}{@{\extracolsep{\fill}}rrcccc@{}}
\toprule
Target & Prep. & SE & PPI+SE & static \method & \method \\
\midrule
1\% & 10\% & 0.00 / 100.0 & 0.57 / 87.0 & 0.43 / 84.3 & 0.70 / 74.6 \\
1\% & 20\% & 0.00 / 100.0 & 0.56 / 87.1 & 0.26 / 86.5 & 0.70 / 74.6 \\
1\% & 30\% & 0.00 / 100.0 & 0.55 / 87.2 & 0.16 / 89.3 & 0.70 / 74.6 \\
1\% & 40\% & 0.00 / 100.0 & 0.47 / 89.6 & 0.10 / 91.5 & 0.70 / 74.6 \\
\addlinespace[2pt]
3\% & 10\% & 2.04 / 66.0 & 2.46 / 56.9 & 1.85 / 55.4 & 3.00 / 37.7 \\
3\% & 20\% & 1.76 / 70.5 & 2.05 / 63.3 & 1.27 / 62.4 & 3.00 / 37.7 \\
3\% & 30\% & 1.28 / 78.2 & 1.76 / 68.4 & 0.97 / 67.2 & 3.00 / 37.7 \\
3\% & 40\% & 1.15 / 81.0 & 1.55 / 72.1 & 0.72 / 72.8 & 3.00 / 37.7 \\
\addlinespace[2pt]
5\% & 10\% & 3.60 / 44.2 & 4.04 / 41.4 & 3.23 / 38.8 & 4.36 / 26.6 \\
5\% & 20\% & 3.20 / 50.3 & 3.63 / 45.3 & 2.48 / 45.8 & 4.36 / 26.6 \\
5\% & 30\% & 2.79 / 56.5 & 3.12 / 52.6 & 1.95 / 52.5 & 4.36 / 26.6 \\
5\% & 40\% & 2.40 / 62.7 & 2.75 / 58.9 & 1.53 / 59.6 & 4.36 / 26.6 \\
\addlinespace[2pt]
7.5\% & 10\% & 5.73 / 35.4 & 6.05 / 28.9 & 4.98 / 27.5 & 6.06 / 18.1 \\
7.5\% & 20\% & 5.34 / 41.6 & 5.65 / 34.2 & 4.11 / 34.4 & 6.06 / 18.1 \\
7.5\% & 30\% & 4.38 / 49.9 & 4.88 / 42.1 & 3.32 / 42.8 & 6.06 / 18.1 \\
7.5\% & 40\% & 4.15 / 55.4 & 4.28 / 49.8 & 2.73 / 51.1 & 6.06 / 18.1 \\
\addlinespace[2pt]
10\% & 10\% & 8.88 / 22.9 & 8.04 / 21.5 & 6.73 / 20.7 & 7.51 / 12.4 \\
10\% & 20\% & 7.80 / 31.7 & 7.55 / 27.8 & 5.45 / 29.1 & 7.51 / 12.4 \\
10\% & 30\% & 6.77 / 40.3 & 6.49 / 37.0 & 4.65 / 37.7 & 7.51 / 12.4 \\
10\% & 40\% & 5.84 / 48.8 & 5.81 / 45.3 & 3.82 / 47.1 & 7.51 / 12.4 \\
\bottomrule
\end{tabular*}
\label{tab:apx-nuplan-grid}
\end{table}

\textbf{Adaptive learning.}
\Cref{fig:apx-nuplan-learning} presents adaptive learning through model improvement, escalation cost, and final-ensemble error.
In the first 1,000 arrivals, \method queries 43.8\% and has 13.1\% model risk; in the final 1,000 arrivals these become 25.6\% and 8.6\%.
Static \method ends at 32.0\% queries and 11.1\% model risk.
Continued residual learning therefore improves decisions before intervention and reduces the amount of intervention needed.
The final-window ensemble risk is 3.7\% for \method.
Short-window risk may fluctuate around the nominal target; these trajectories describe successive deployed policies rather than a guarantee for every realized window.

\begin{figure}[th]
    \centering
    \begin{subfigure}[t]{0.32\linewidth}
        \includegraphics[width=\linewidth]{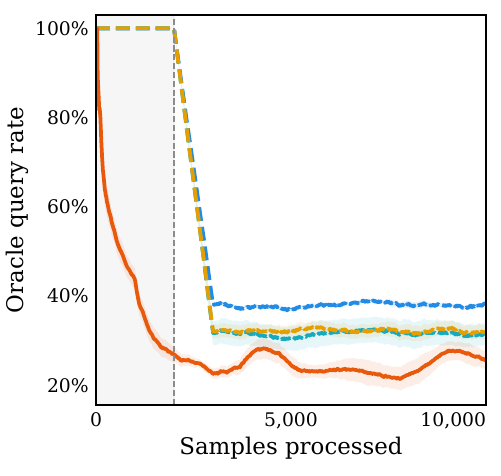}
    \end{subfigure}\hfill
    \begin{subfigure}[t]{0.32\linewidth}
        \includegraphics[width=\linewidth]{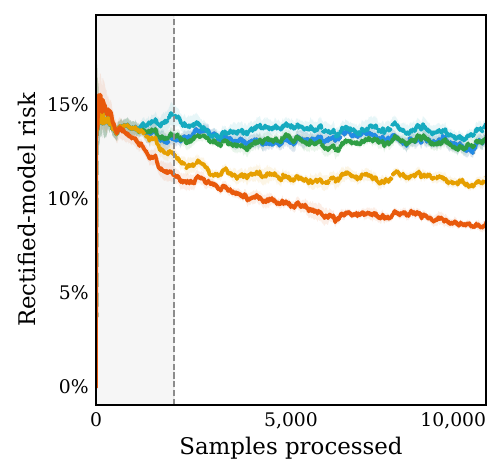}
    \end{subfigure}\hfill
    \begin{subfigure}[t]{0.32\linewidth}
        \includegraphics[width=\linewidth]{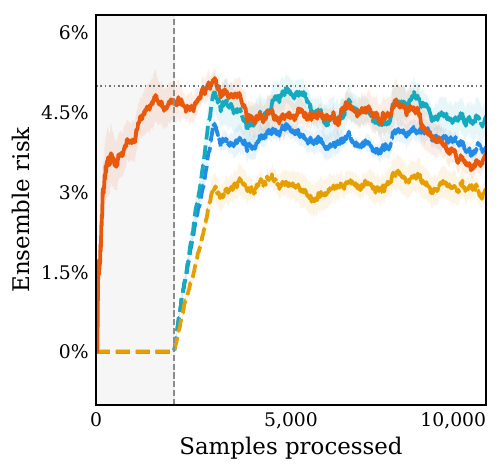}
    \end{subfigure}
    \includegraphics[width=\linewidth]{fig/apx-method-legend.pdf}
    \caption{nuPlan learning trajectories at the 5\% target and 20\% static preparation. Lines average trailing windows of up to 1,000 arrivals over ten trials. Gray shading marks the 2,000-arrival static preparation period. AI and PPI are shown as uncalibrated references in the model-risk panel. The dotted horizontal line is the nominal ensemble-risk target.}
    \label{fig:apx-nuplan-learning}
\end{figure}

\textbf{Local adaptation.}
The frozen-model analysis in \cref{fig:nuplan-regions} isolates spatial variation from changes in the arriving stream.
\Cref{tab:apx-nuplan-regions} supplies the region definitions, evaluation support, and final \method values.
The strongest correction occurs in R1 and R2, where open-loop TTC is below one second: model risk falls from 37.7\% to 13.6\% and from 36.9\% to 15.1\%, respectively.
These regions also receive the most oracle queries, reflecting their greater remaining difficulty.
On the full weighted panel, \method reduces model risk from 12.7\% initially to 10.2\% after 2,000 decisions and 7.9\% after 10,000; static \method remains at 10.4\% after preparation.
Final mean regional ensemble risks for \method range from 3.0\% to 4.5\%, all below the 5\% target.
As in the main text, GPC+SE also remains below the target in every region, whereas SE and PPI+SE exceed it in R5 and R6.
This is an empirical regional balance result; the calibration target concerns marginal risk, and the smaller regions have correspondingly larger sampling uncertainty.

\begin{table}[t]
\centering
\small
\setlength{\tabcolsep}{3pt}
\begin{tabular*}{\linewidth}{@{\extracolsep{\fill}}clrrrrr@{}}
\toprule
& Region & Panel & AI risk & \multicolumn{3}{c}{Final \method (\%)} \\
\cmidrule(lr){5-7}
& & size & (\%) & Model risk & Ensemble risk & Queries \\
\midrule
R1 & TTC $<1$, ADE $<5$ & 53 & 37.7 & $13.6\pm1.5$ & $3.0\pm1.1$ & $49.6\pm5.0$ \\
R2 & TTC $<1$, ADE $\ge5$ & 293 & 36.9 & $15.1\pm0.5$ & $3.3\pm0.4$ & $56.5\pm3.7$ \\
R3 & Finite TTC $\ge1$, ADE $<5$ & 320 & 6.9 & $5.1\pm0.2$ & $3.5\pm0.2$ & $13.8\pm1.9$ \\
R4 & Finite TTC $\ge1$, ADE $\ge5$ & 374 & 11.5 & $8.1\pm0.5$ & $3.3\pm0.3$ & $30.6\pm2.8$ \\
R5 & Infinite TTC, ADE $<5$ & 468 & 6.8 & $6.3\pm0.2$ & $3.9\pm0.3$ & $11.9\pm1.4$ \\
R6 & Infinite TTC, ADE $\ge5$ & 492 & 6.1 & $6.1\pm0.1$ & $4.5\pm0.3$ & $10.1\pm1.6$ \\
\bottomrule
\end{tabular*}
\caption{nuPlan regional definitions and final frozen-model results at the 5\% target. TTC (time to collision) is in seconds and ADE (average displacement error) is in meters.}
\label{tab:apx-nuplan-regions}
\end{table}

\subsection{HateBench}
\label{apx:exp-hatebench}

\textbf{Task and evaluation.}
We use HateBench responses \citep{shen2025HateBenchBenchmarking} with eight cached detectors as the ``AI'' predictors and released human hate/non-hate annotations as the oracle labels.
The population contains 7,838 responses; a fixed, label-blind split reserves 784 responses by normalized-text group, leaving 7,054 for each trial.
The reserved panel and stream orders are shared across detectors.
Prompt families may occur in both partitions, so this evaluates within-population adaptation.
Perspective and both Detoxify variants use the identity-attack score at threshold 0.5; the other detectors retain their native binary decisions with scores oriented toward the hate class.
The confidence supplied to escalation is the absolute margin from 0.5, scaled to $[0,1]$.

\textbf{Feature space.}
The rectifier receives response-text TF--IDF features with unigrams and bigrams, reduced to 32 singular-vector coordinates, together with log response length and the selected detector's available category scores.
Coordinates are standardized, clipped at six, and divided by the square root of the feature dimension.
Representations are fitted without labels on each trial's first 20\% of covariates, frozen, and shared across methods.
The same fixed GP prior is used across all eight detectors.

\textbf{Risk--cost tradeoffs.}
\Cref{fig:apx-hatebench-sweeps} show the tradeoff across all five risk targets for all models, and selectively, \cref{tab:apx-hatebench-grid-perspective} reports the tradeoff across four preparation budgets for Perspective.
With Perspective, at the 5\% target and 10\% static preparation, \method achieves 3.71\% ensemble risk with 21.5\% queries, compared with 54.0\% for SE, 31.8\% for PPI+SE, and 28.3\% for static \method.
Query cost falls from 42.2\% at the 1\% target to 13.7\% at the 10\% target.
Larger static preparation budgets generally reduce ensemble risk but increase total query cost.
Adaptive \method has lower mean query cost than these three comparators throughout the reported grid.
Similar trends hold with other models.

\begin{figure}[th]
\centering
\begin{subfigure}[t]{0.24\linewidth}\centering
\includegraphics[width=\linewidth]{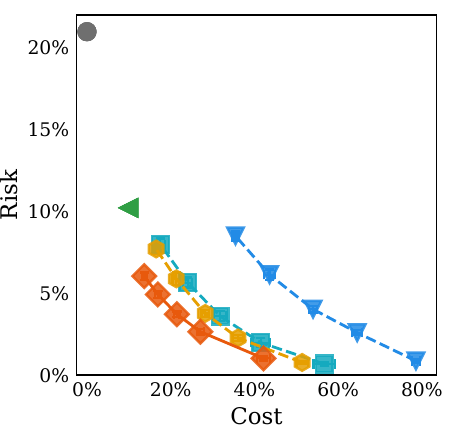}
\caption{Perspective.}\end{subfigure}
\hfill
\begin{subfigure}[t]{0.24\linewidth}\centering
\includegraphics[width=\linewidth]{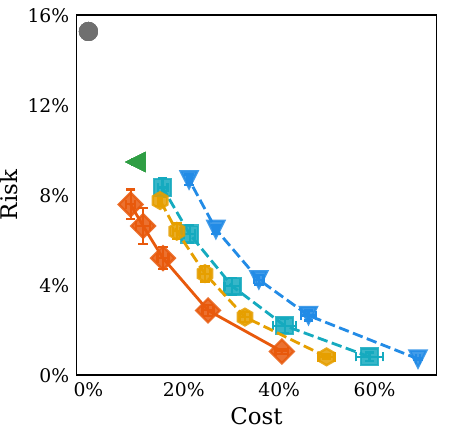}
\caption{Moderation.}\end{subfigure}
\begin{subfigure}[t]{0.24\linewidth}\centering
\includegraphics[width=\linewidth]{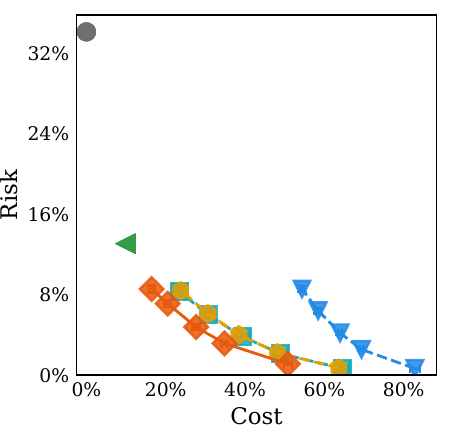}
\caption{Detoxify Original.}\end{subfigure}
\hfill
\begin{subfigure}[t]{0.24\linewidth}\centering
\includegraphics[width=\linewidth]{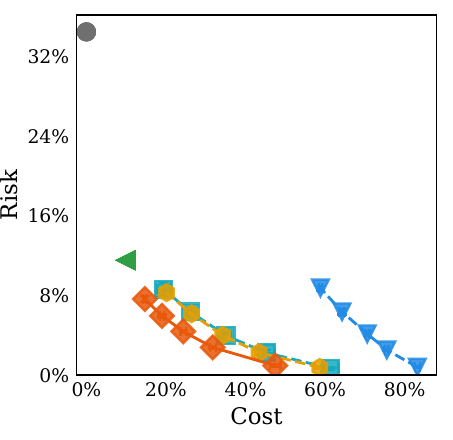}
\caption{Detoxify Unbiased.}\end{subfigure}

\begin{subfigure}[t]{0.24\linewidth}\centering
\includegraphics[width=\linewidth]{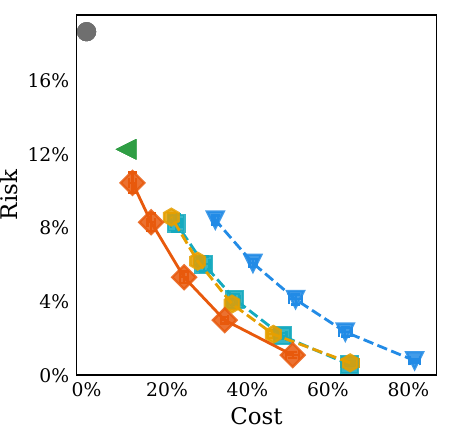}
\caption{LFTW.}\end{subfigure}
\hfill
\begin{subfigure}[t]{0.24\linewidth}\centering
\includegraphics[width=\linewidth]{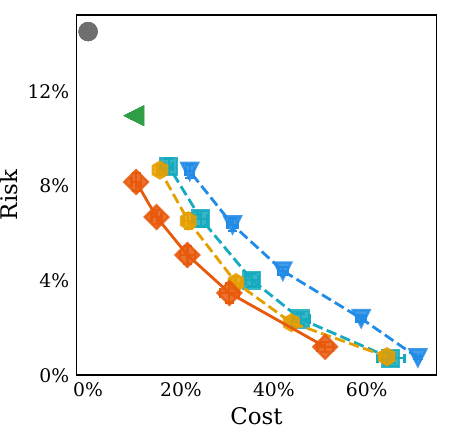}
\caption{TweetHate.}\end{subfigure}
\begin{subfigure}[t]{0.24\linewidth}\centering
\includegraphics[width=\linewidth]{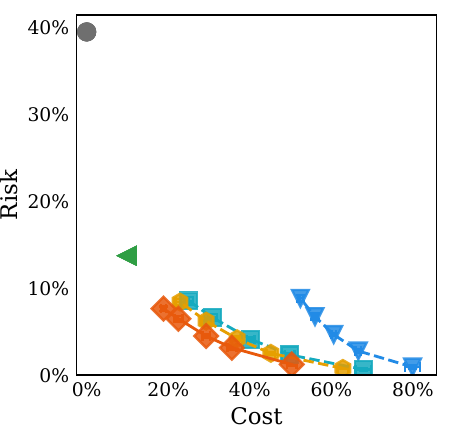}
\caption{HSBERT.}\end{subfigure}
\hfill
\begin{subfigure}[t]{0.24\linewidth}\centering
\includegraphics[width=\linewidth]{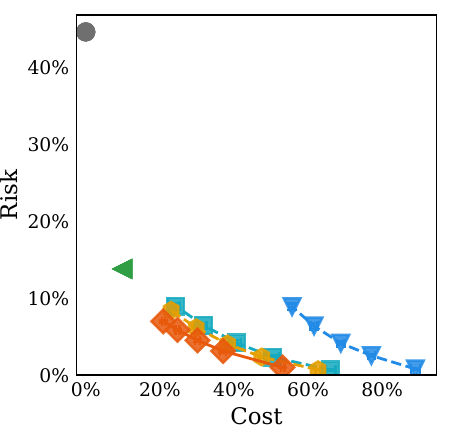}
\caption{BERTHateXplain.}\end{subfigure}
\includegraphics[width=\linewidth]{fig/apx-method-legend.pdf}

\caption{HateBench risk--cost tradeoffs with different models. Targets of 1\%, 3\%, 5\%, 7.5\%, and 10\%, with 10\% static preparation.}
\label{fig:apx-hatebench-sweeps}
\end{figure}

\begin{table}[t]\centering\small
\caption{HateBench risk--cost tradeoffs with Perspective. Ensemble risk / oracle query rate (\%) over the complete target and static-preparation grid.}
\label{tab:apx-hatebench-grid-perspective}
\setlength{\tabcolsep}{4pt}\renewcommand{\arraystretch}{1.05}
\begin{tabular*}{\linewidth}{@{\extracolsep{\fill}}rrcccc@{}}\toprule
Target & Prep. & SE & PPI+SE & static \method & \method \\ \midrule
1\% & 10\% & 0.86 / 78.6 & 0.67 / 56.7 & 0.77 / 51.4 & 1.02 / 42.2 \\
1\% & 20\% & 0.81 / 80.3 & 0.54 / 57.5 & 0.69 / 53.8 & 1.02 / 42.2 \\
1\% & 30\% & 0.70 / 82.8 & 0.57 / 59.8 & 0.60 / 58.6 & 1.02 / 42.2 \\
1\% & 40\% & 0.59 / 85.3 & 0.53 / 64.1 & 0.53 / 64.0 & 1.02 / 42.2 \\
\addlinespace[2pt]
3\% & 10\% & 2.58 / 64.6 & 1.97 / 41.5 & 2.26 / 36.1 & 2.64 / 27.1 \\
3\% & 20\% & 2.31 / 68.2 & 2.01 / 41.8 & 2.02 / 40.7 & 2.64 / 27.1 \\
3\% & 30\% & 1.98 / 72.5 & 1.66 / 48.5 & 1.73 / 47.3 & 2.64 / 27.1 \\
3\% & 40\% & 1.73 / 76.0 & 1.58 / 54.8 & 1.62 / 54.0 & 2.64 / 27.1 \\
\addlinespace[2pt]
5\% & 10\% & 4.01 / 54.0 & 3.59 / 31.8 & 3.75 / 28.3 & 3.71 / 21.5 \\
5\% & 20\% & 3.70 / 58.3 & 3.56 / 34.2 & 3.50 / 33.8 & 3.71 / 21.5 \\
5\% & 30\% & 3.20 / 63.6 & 3.20 / 41.2 & 3.04 / 41.4 & 3.71 / 21.5 \\
5\% & 40\% & 2.87 / 68.1 & 2.64 / 49.7 & 2.78 / 48.9 & 3.71 / 21.5 \\
\addlinespace[2pt]
7.5\% & 10\% & 6.11 / 43.6 & 5.63 / 24.1 & 5.86 / 21.3 & 4.91 / 16.9 \\
7.5\% & 20\% & 5.50 / 49.2 & 5.45 / 28.1 & 5.36 / 27.8 & 4.91 / 16.9 \\
7.5\% & 30\% & 4.79 / 55.6 & 4.94 / 35.7 & 4.67 / 36.1 & 4.91 / 16.9 \\
7.5\% & 40\% & 4.22 / 61.4 & 4.24 / 44.8 & 4.29 / 44.4 & 4.91 / 16.9 \\
\addlinespace[2pt]
10\% & 10\% & 8.48 / 35.5 & 7.95 / 17.6 & 7.68 / 16.5 & 6.03 / 13.7 \\
10\% & 20\% & 7.68 / 42.3 & 7.36 / 23.2 & 6.84 / 24.0 & 6.03 / 13.7 \\
10\% & 30\% & 6.75 / 49.4 & 6.77 / 31.4 & 5.94 / 32.9 & 6.03 / 13.7 \\
10\% & 40\% & 5.82 / 56.5 & 5.73 / 41.3 & 5.19 / 42.2 & 6.03 / 13.7 \\
\bottomrule\end{tabular*}\end{table}

\textbf{Adaptive learning.}
\Cref{fig:apx-hatebench-learning} presents adaptive learning with Perspective through model improvement, escalation cost, and final-ensemble error.
In the first 500 arrivals, \method queries 41.4\% and has 14.3\% model risk; in the final 500 arrivals these become 16.8\% and 9.4\%.
Continued residual learning therefore improves decisions before intervention and reduces the amount of intervention needed.
The final-window ensemble risk is 3.7\% for \method.

\begin{figure}[th]
    \centering
    \begin{subfigure}[t]{0.32\linewidth}
        \includegraphics[width=\linewidth]{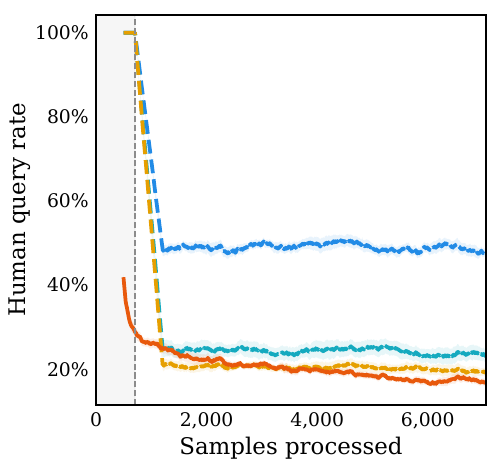}
    \end{subfigure}\hfill
    \begin{subfigure}[t]{0.32\linewidth}
        \includegraphics[width=\linewidth]{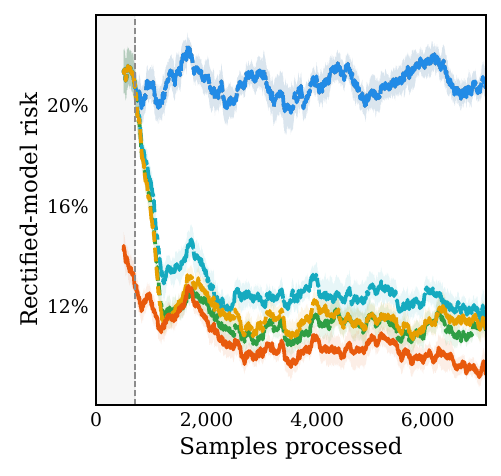}
    \end{subfigure}\hfill
    \begin{subfigure}[t]{0.32\linewidth}
        \includegraphics[width=\linewidth]{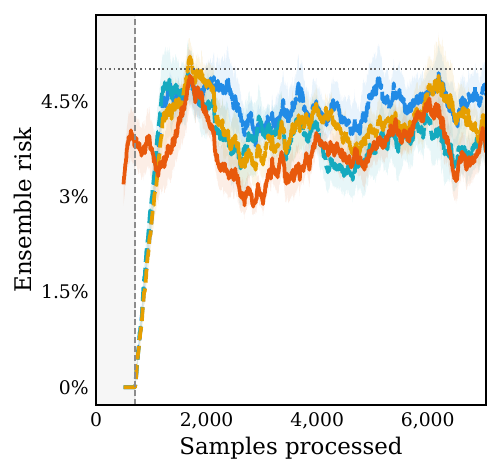}
    \end{subfigure}
    \includegraphics[width=\linewidth]{fig/apx-method-legend.pdf}
    \caption{HateBench learning trajectories with Perspective at the 5\% target and 10\% static preparation. Lines average complete trailing windows of 500 arrivals over ten trials. Gray shading marks the 705-arrival static preparation period. AI and PPI are shown as uncalibrated references in the model-risk panel. The dotted horizontal line is the nominal ensemble-risk target.}
    \label{fig:apx-hatebench-learning}
\end{figure}

\textbf{Robustness across AI models.}
\Cref{tab:hatebench-models} summarize query cost and held-out model improvement across all eight detectors.
At the 5\% target and 10\% static preparation, \method uses 15.6--30.2\% queries with mean ensemble risks of 3.71--5.31\%.
These correspond to query reductions of 49--66\% relative to SE and 26--48\% relative to PPI+SE.
On the reserved panel, raw model risk ranges from 14.2\% to 47.3\%, while final \method model risk ranges from 10.8\% to 14.6\%.
The relative reductions of 15.5--70.5\% show that residual learning improves substantially different initial predictors.

\begin{table}[t]
	\centering
	\caption{HateBench comparison across AI models at the 5\% target and 10\% static preparation. All values are percentages. Raw AI error is measured on the stream; model-error reduction is measured on the reserved panel.}
	\label{tab:hatebench-models}
	\small
	\setlength{\tabcolsep}{2pt}
	\renewcommand{\arraystretch}{1.1}
	\begin{tabular*}{\linewidth}{@{\extracolsep{\fill}}lrrrrrrr@{}}
		\toprule
		\multirow{2}{*}{AI model} & \multirow{2}{*}{\makecell{Raw AI\\error}} &
		\multicolumn{3}{c}{Human query rate} & \multicolumn{3}{c}{Relative improvement} \\
		\cmidrule(lr){3-5}\cmidrule(l){6-8}
		& & SE & PPI+SE & \method & \makecell{Query saving\\vs SE $\uparrow$} &
		\makecell{Query saving\\vs PPI+SE $\uparrow$} &
		\makecell{Model-error\\reduction $\uparrow$} \\
		\midrule
		Perspective & 20.9 & $54.0 \pm 0.9$ & $31.8 \pm 1.5$ & $21.5 \pm 0.7$ & 60.2 & 32.5 & 40.5 \\
		Moderation & 15.3 & $35.7 \pm 1.0$ & $30.2 \pm 1.7$ & $15.6 \pm 1.2$ & 56.3 & 48.2 & 15.5 \\
		Detoxify Original & 34.1 & $64.0 \pm 0.5$ & $39.3 \pm 1.5$ & $27.7 \pm 1.0$ & 56.7 & 29.6 & 58.6 \\
		Detoxify Unbiased & 34.5 & $70.6 \pm 0.6$ & $35.1 \pm 1.3$ & $24.3 \pm 0.7$ & 65.5 & 30.6 & 65.8 \\
		LFTW & 18.6 & $52.0 \pm 1.6$ & $36.9 \pm 1.2$ & $24.2 \pm 1.0$ & 53.4 & 34.3 & 27.2 \\
		TweetHate & 14.5 & $41.9 \pm 1.1$ & $35.3 \pm 1.4$ & $21.4 \pm 1.6$ & 49.0 & 39.3 & 23.4 \\
		HSBERT & 39.4 & $60.6 \pm 0.5$ & $40.3 \pm 0.9$ & $29.3 \pm 0.9$ & 51.6 & 27.2 & 65.7 \\
		BERTHateXplain & 44.7 & $68.8 \pm 1.0$ & $40.7 \pm 1.2$ & $30.2 \pm 0.7$ & 56.2 & 25.9 & 70.5 \\
		\bottomrule
	\end{tabular*}
\end{table}

\subsection{iSafetyBench}
\label{apx:exp-isafety}

\textbf{Task and evaluation.}
We use iSafetyBench videos \citep{abdullah2025iSafetyBenchvideolanguage} for binary workplace hazard detection, with the frozen Qwen3-VL-2B-Instruct model as the ``AI'' predictor and released category annotations as the oracle labels.
The population contains 1,100 clips: 420 hazardous and 680 routine.
The AI receives 4--32 frames sampled across each clip, without audio.
Its hazard probability $p_A$ is obtained from the two answer logits, and a separate Yes/No follow-up supplies direct confidence $c_{\mathrm{direct}}=P(\text{Yes})$.
We compare this with inferred confidence $c_{\mathrm{inferred}}=2|p_A-1/2|$, holding the video inputs and initial AI decisions fixed.
Each trial processes all 1,100 clips from 151 source-video families, grouped within calibration folds.

\textbf{Feature space.}
The rectifier receives the 2,048-dimensional final-layer prompt state, reduced to 32 principal components.
Representations are fitted without labels on each trial's first 20\% of covariates, frozen, and shared across methods and confidence measures.

\textbf{Risk--cost tradeoffs.}
\Cref{fig:apx-isafety-sweeps} show the tradeoff across all four risk targets and four preparation budgets under both confidence measures.
At the 5\% target and 10\% static preparation, direct-confidence \method achieves 3.79\% ensemble risk with 21.7\% queries, compared with 57.1\% for SE and 26.7\% for static \method.
With inferred confidence, it achieves 2.86\% risk with 24.8\% queries, compared with 35.1\% and 27.4\% for these baselines.
Mean \method risk is below every tested target under both confidence measures.
Larger static preparation budgets generally reduce ensemble risk but increase total query cost: static \method's query rate rises from 26.7\% to 50.2\% across the direct-confidence budget sweep at the 5\% target.

\begin{figure}[th]
\centering
\begin{subfigure}[t]{0.24\linewidth}
\includegraphics[width=\linewidth]{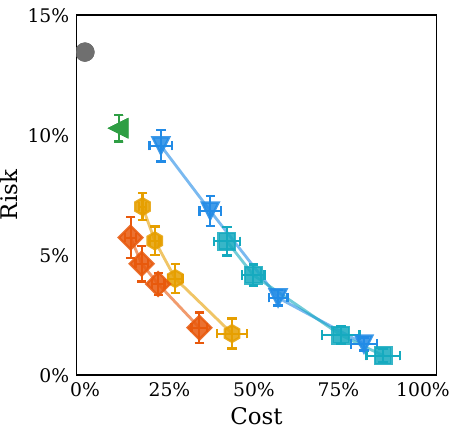}
\caption{Direct: risk sweep.}
\end{subfigure}\hfill
\begin{subfigure}[t]{0.24\linewidth}
\includegraphics[width=\linewidth]{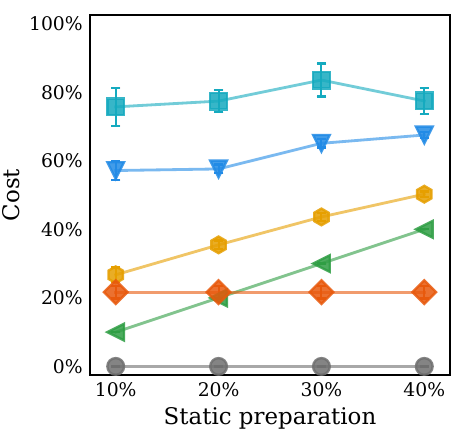}
\caption{Direct: prep. sweep.}
\end{subfigure}
\begin{subfigure}[t]{0.24\linewidth}
\includegraphics[width=\linewidth]{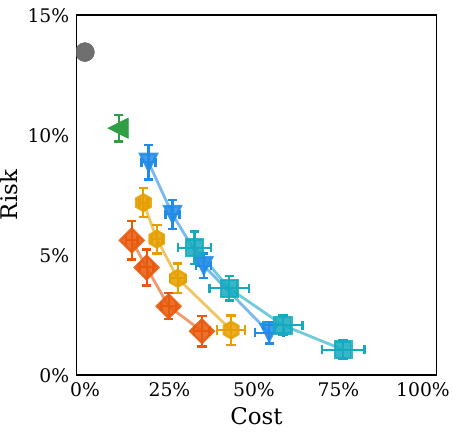}
\caption{Inferred: risk sweep.}
\end{subfigure}\hfill
\begin{subfigure}[t]{0.24\linewidth}
\includegraphics[width=\linewidth]{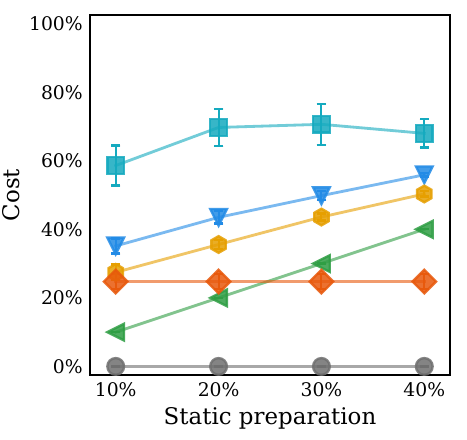}
\caption{Inferred: prep. sweep.}
\end{subfigure}
\includegraphics[width=\linewidth]{fig/apx-method-legend.pdf}
\caption{iSafetyBench risk--cost tradeoffs. (a, c) Targets of 2.5\%, 5\%, 7.5\%, and 10\%, with 10\% static preparation. (b, d) 10\%, 20\%, 30\%, and 40\% preparation at the 5\% target.}
\label{fig:apx-isafety-sweeps}
\end{figure}

\textbf{Adaptive learning.}
\Cref{fig:apx-isafety-learning} presents adaptive learning through model improvement, escalation cost, and final-ensemble error.
With direct confidence, \method queries 25.3\% and has 9.3\% model risk in the window ending at 300 clips; in the final 100 clips these become 6.9\% and 4.8\%.
Static \method ends at 16.0\% queries and 9.4\% model risk.
Continued residual learning therefore improves decisions before intervention and reduces the amount of intervention needed.
The inferred-confidence results show a similar reduction in model risk, reaching 4.5\% in the final window.

\begin{figure}[th]
\centering
\begin{subfigure}[t]{0.32\linewidth}
\includegraphics[width=\linewidth]{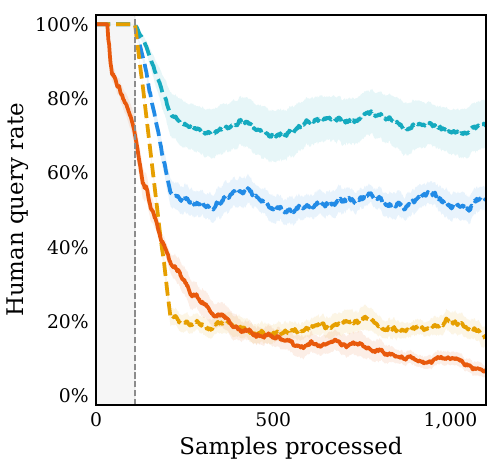}
\end{subfigure}\hfill
\begin{subfigure}[t]{0.32\linewidth}
\includegraphics[width=\linewidth]{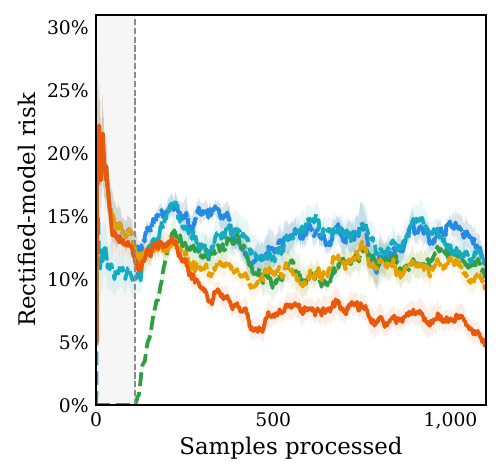}
\caption{Direct confidence.}
\end{subfigure}\hfill
\begin{subfigure}[t]{0.32\linewidth}
\includegraphics[width=\linewidth]{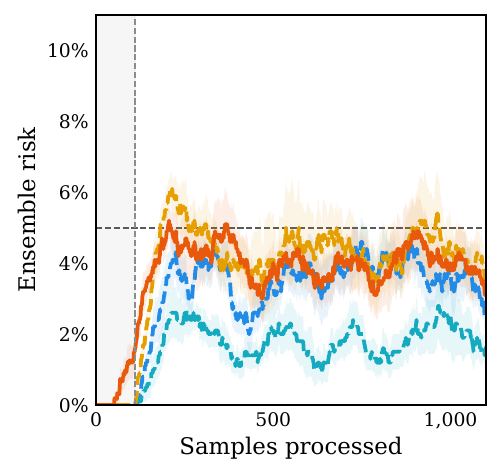}
\end{subfigure}
\par\medskip
\begin{subfigure}[t]{0.32\linewidth}
\includegraphics[width=\linewidth]{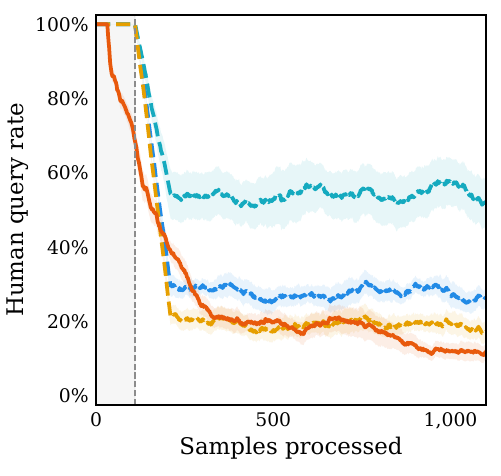}
\end{subfigure}\hfill
\begin{subfigure}[t]{0.32\linewidth}
\includegraphics[width=\linewidth]{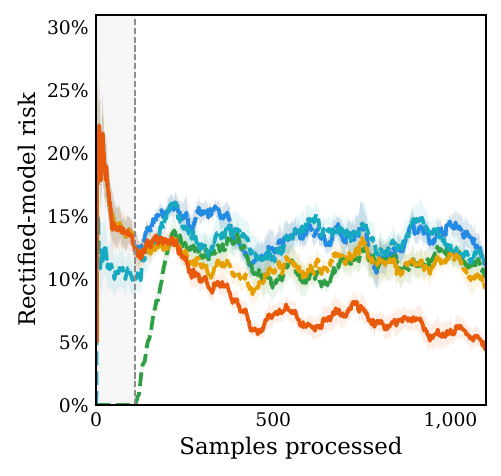}
\caption{Inferred confidence.}
\end{subfigure}\hfill
\begin{subfigure}[t]{0.32\linewidth}
\includegraphics[width=\linewidth]{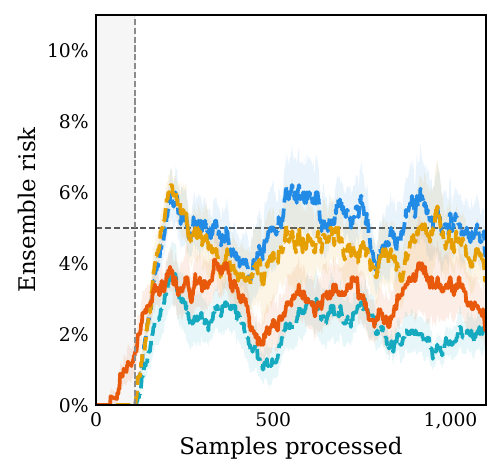}
\end{subfigure}
\includegraphics[width=\linewidth]{fig/apx-method-legend.pdf}
\caption{iSafetyBench learning trajectories at the 5\% target and 10\% static preparation. Lines average trailing windows of up to 100 arrivals over ten trials. Gray shading marks the 110-arrival static preparation period, where static model risks can include in-sample predictions. AI and PPI are shown as uncalibrated references in the model-risk panels. The dashed horizontal line is the nominal ensemble-risk target.}
\label{fig:apx-isafety-learning}
\end{figure}

\textbf{Robustness to confidence choice.}
Inferred confidence ranks the original AI's correct decisions more accurately than direct confidence (correctness AUROC 0.803 versus 0.642).
\Cref{tab:apx-isafety-confidence} extends the paired comparison in \cref{fig:isafety-confidence} to all four risk targets.
At 2.5\%, replacing inferred with direct confidence increases query rates by 28.0 percentage points for SE, 11.8 for PPI+SE, and 14.4 for \method with PPI, while GP \method changes by only $-0.7$ points.
Across the four targets, GP \method's absolute mean query-rate change is at most 3.1 points.
Rectifying confidence together with predictions reduces reliance on the original confidence measure.

\begin{table}[t]
\centering\small
\setlength{\tabcolsep}{2.5pt}
\caption{Effect of confidence choice on iSafetyBench with 10\% static preparation. Entries are paired mean differences (direct minus inferred), in percentage points, for ensemble risk ($\Delta R$) and human query rate ($\Delta Q$).}
\label{tab:apx-isafety-confidence}
\begin{tabular*}{\linewidth}{@{\extracolsep{\fill}}lrrrrrrrr@{}}
\toprule
& \multicolumn{2}{c}{2.5\% target} & \multicolumn{2}{c}{5\% target} & \multicolumn{2}{c}{7.5\% target} & \multicolumn{2}{c}{10\% target}\\
\cmidrule(lr){2-3}\cmidrule(lr){4-5}\cmidrule(lr){6-7}\cmidrule(l){8-9}
Method & $\Delta R$ & $\Delta Q$ & $\Delta R$ & $\Delta Q$ & $\Delta R$ & $\Delta Q$ & $\Delta R$ & $\Delta Q$\\
\midrule
SE & $-0.5$ & $+28.0$ & $-1.3$ & $+22.0$ & $+0.1$ & $+11.2$ & $+0.7$ & $+3.7$\\
PPI+SE & $-0.2$ & $+11.8$ & $-0.4$ & $+17.1$ & $+0.6$ & $+7.1$ & $+0.3$ & $+9.6$\\
GPC+SE & $+0.1$ & $-1.2$ & $-0.2$ & $+1.1$ & $-0.3$ & $+0.4$ & $+0.1$ & $-0.7$\\
static \method & $-0.1$ & $+0.3$ & $+0.0$ & $-0.7$ & $-0.1$ & $-0.6$ & $-0.2$ & $-0.3$\\
\method with PPI & $+0.2$ & $+14.4$ & $-0.6$ & $+5.6$ & $-0.5$ & $+2.1$ & $+0.0$ & $-3.0$\\
\method & $+0.1$ & $-0.7$ & $+0.9$ & $-3.1$ & $+0.2$ & $-1.5$ & $+0.1$ & $-0.4$\\
\bottomrule
\end{tabular*}
\end{table}

\subsection{ASIMOV}
\label{apx:exp-asimov}

\textbf{Task and evaluation.}
We use ASIMOV-Agentic queries \citep{gemini2026roboticsSafety} for binary robot dispatch decisions, with the frozen Qwen3-VL-2B-Instruct model as the ``AI'' predictor and benchmark reference decisions as the oracle labels.
The population contains 1,437 queries: 641 on physical constraints, 396 on instruction ambiguity, and 400 on VLA feasibility.
The binary task is deciding whether to dispatch the instruction for immediate execution.
Physical-constraint references are human annotated; the other subsets use the released benchmark references.
The AI receives the image, instruction, and tool schemas, and its decision probability is obtained from the two answer logits.
Direct and inferred confidence follow the same construction as in iSafetyBench, holding inputs and initial AI decisions fixed.
Each trial processes all 1,437 queries from 722 distinct images, grouped within calibration folds.

\textbf{Feature space.}
The rectifier receives 13,312 image-and-text features from decoder and vision hidden states, reduced to 64 principal components.
These coordinates and the signed Yes-minus-No logit margin contribute 75\% and 25\% of normalized feature energy, respectively.
Representations are fitted without labels on each trial's first 20\% of covariates, frozen, and shared across methods and confidence measures.

\textbf{Risk--cost tradeoffs.}
\Cref{fig:apx-asimov-sweeps} show the tradeoff across all four risk targets and four preparation budgets under both confidence measures.
At the 5\% target and 20\% static preparation with inferred confidence, \method achieves 4.76\% ensemble risk with 50.8\% queries, compared with 90.5\% for SE, 95.3\% for PPI+SE, and 60.3\% for static \method.
Mean \method risk is below every tested target under both confidence measures.
Larger static preparation budgets do not always reduce total query cost: static \method uses 75.4\%, 60.3\%, 63.3\%, and 66.3\% queries across the four budgets at the 5\% target with inferred confidence.
Adaptive \method avoids choosing a fixed preparation size.

\begin{figure}[th]
    \centering
    \begin{subfigure}[t]{0.24\linewidth}
        \centering
        \includegraphics[width=\linewidth]{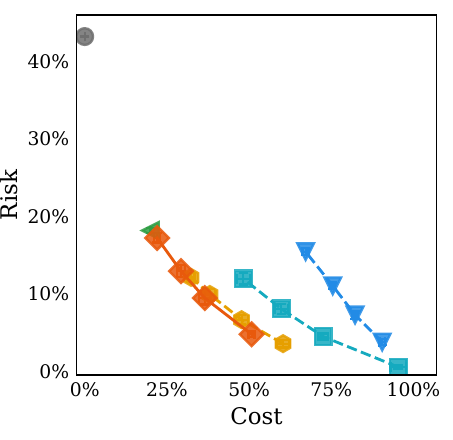}
        \caption{Inferred confidence.}
    \end{subfigure}\hfill
    \begin{subfigure}[t]{0.24\linewidth}
        \centering
        \includegraphics[width=\linewidth]{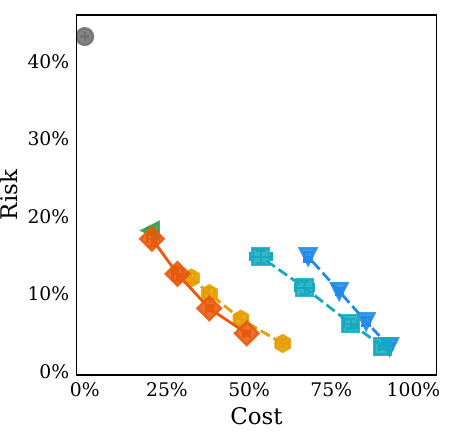}
        \caption{Direct confidence.}
    \end{subfigure}
    \begin{subfigure}[t]{0.24\linewidth}
        \centering
        \includegraphics[width=\linewidth]{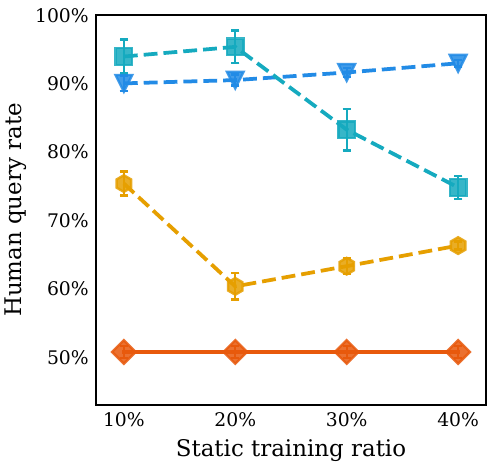}
        \caption{Queries vs preparation.}
    \end{subfigure}\hfill
    \begin{subfigure}[t]{0.24\linewidth}
        \centering
        \includegraphics[width=\linewidth]{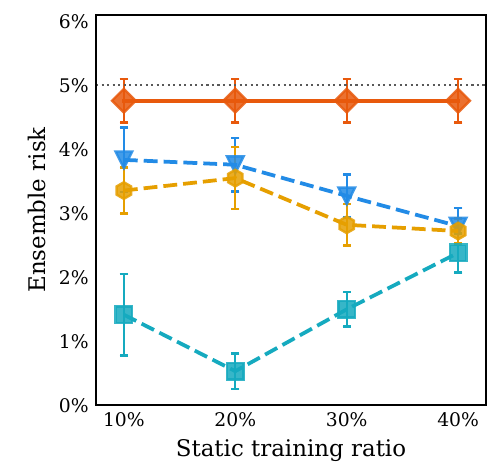}
        \caption{Risk vs preparation.}
    \end{subfigure}
    \includegraphics[width=\linewidth]{fig/apx-method-legend.pdf}
    \caption{ASIMOV risk--cost tradeoffs. (a, b) Targets of 5\%, 10\%, 15\%, and 20\%, with 20\% static preparation. (c, d) 10\%, 20\%, 30\%, and 40\% preparation at the 5\% target with inferred confidence; the dotted horizontal line is the nominal ensemble-risk target.}
    \label{fig:apx-asimov-sweeps}
\end{figure}

\textbf{Adaptive learning.}
\Cref{fig:apx-asimov-learning} presents adaptive learning through model improvement, escalation cost, and final-ensemble error.
After 500 samples, GP-based \method queries 47.4\% and has 21.4\% rolling model risk, compared with 65.6\% and 23.1\% for \method with PPI.
Both rectifiers improve with feedback and reach similar model risk by the end of the stream (18.7\% for GP and 18.5\% for PPI).
Learning useful corrections earlier lowers GP \method's average query rate over the full stream to 50.8\%, compared with 57.9\% for \method with PPI.

\begin{figure}[th]
    \centering
    \begin{subfigure}[t]{0.32\linewidth}
        \centering
        \includegraphics[width=\linewidth]{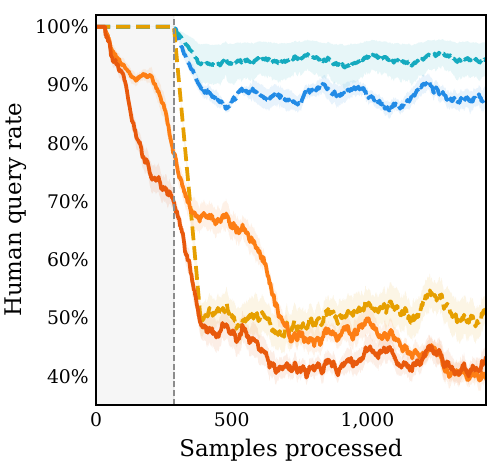}
    \end{subfigure}\hfill
    \begin{subfigure}[t]{0.32\linewidth}
        \centering
        \includegraphics[width=\linewidth]{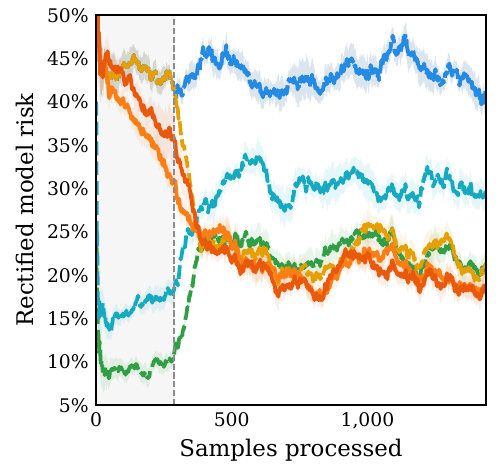}
    \end{subfigure}\hfill
    \begin{subfigure}[t]{0.32\linewidth}
        \centering
        \includegraphics[width=\linewidth]{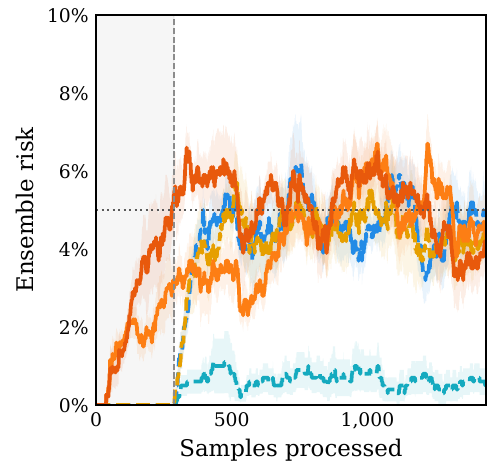}
    \end{subfigure}
    \includegraphics[width=\linewidth]{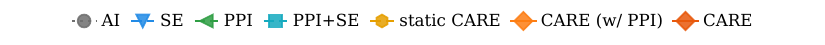}
    \caption{ASIMOV learning trajectories at the 5\% target and 20\% static preparation with inferred confidence. Lines average trailing windows of up to 100 arrivals over ten trials. Gray shading marks the 287-arrival static preparation period, where static model risks can include in-sample predictions. AI and PPI are shown as uncalibrated references in the model-risk panel. The dotted horizontal line is the nominal ensemble-risk target. Some initial model-risk transients are outside the plotted range.}
    \label{fig:apx-asimov-learning}
\end{figure}

\textbf{Robustness to confidence choice.}
At the 5\% target, replacing inferred with direct confidence changes \method's query rate by only $-1.5$ percentage points (50.8\% to 49.3\%), with mean ensemble risks of 4.76\% and 4.88\%, respectively.
For \method with PPI, the same change increases queries by 28.3 points (57.9\% to 86.2\%); SE changes by 2.2 points at 20\% preparation.
As in iSafetyBench, rectifying confidence together with predictions reduces reliance on the original confidence measure.

\FloatBarrier

\section{Risk decomposition}\label{apx:decomp}

\begin{repproposition}{prop:decomp}[Risk decomposition]
	Let $p_H(x)=P(Y=1\given X=x,H)$ be the human-aligned conditional probability, induced by human preference $H$.
	For a random training dataset $D$ drawn independently (conditioned on $H$) from the same distribution as $(X,Y)$, namely $P_{X}P_{Y|X,H}$, where $P_{Y|X,H}= \operatorname{Bern}(p_{H}(X))$,
	let $p_{D}=\mathbb{E}_{H}[p_{H}\given D]$ be the training-conditional learned predictor, and let $p_B=\EE_{D}[p_{D}]$ be the training-marginal counterpart.
	Then, we have
	\[
		&R_{2,x}(p_{A}) = \EE_{Y}[(p_A(X)-Y)^2\given X=x] \\
		=& \underbrace{\mathbb{E}_{H}[p_{H}(x)(1-p_{H}(x))]}_{\ds\text{aleatoric}}
		+ \underbrace{\mathbb{E}_{D,H}[(p_{H}(x)-p_{D}(x))^{2}]}_{\ds\text{excess}}
		+ \underbrace{\mathbb{E}_{D}[(p_{D}(x)-p_{B}(x))^{2}]}_{\ds\text{epistemic}}
		+ \underbrace{(p_{B}(x)-p_{A}(x))^{2}}_{\ds\text{bias}}
		.\]
\end{repproposition}
\begin{proof}
	By the tower property,
	\[
		&R_{2,x}(p_{A}) = \EE_{Y}[(p_A(X)-Y)^2\given X=x] \\
		=& \mathbb{E}_{H}[\EE_{Y}[(Y-p_{H}(X)+p_{H}(X)-p_{A}(x))^2\given X=x, H]] \\
		=& \mathbb{E}_{H}[p_{H}(x)(1-p_{H}(x))
				+ (p_{H}(x)-p_{A}(x))^{2}] \label{eq:decomp-1} \\
		=& \mathbb{E}_{H}[p_{H}(x)(1-p_{H}(x))]
		+ \mathbb{E}_{D}[ \mathbb{E}_{H} [(p_{H}(x)-p_{D}(x)
						+ p_{D}(x)-p_{A}(x))^{2}\given D] ] \\
		=& \mathbb{E}_{H}[p_{H}(x)(1-p_{H}(x))]
		+ \mathbb{E}_{D,H}[ (p_{H}(x)-p_{D}(x))^{2}
				+ (p_{D}(x)-p_{A}(x))^{2} ]\label{eq:decomp-2} \\
		=& \mathbb{E}_{H}[p_{H}(x)(1-p_{H}(x))]
		+ \mathbb{E}_{D,H}[(p_{H}(x)-p_{D}(x))^{2}]
		+ \mathbb{E}_{D}[(p_{D}(x)-p_{B}(x)
				+ p_{B}(x)-p_{A}(x))^{2}] \\
		=& \mathbb{E}_{H}[p_{H}(x)(1-p_{H}(x))]
		+ \mathbb{E}_{D,H}[(p_{H}(x)-p_{D}(x))^{2}]
		+ \mathbb{E}_{D}[(p_{D}(x)-p_{B}(x))^{2}]
		+ (p_{B}(x)-p_{A}(x))^{2} \label{eq:decomp-3},
	\]
	where the cross terms in
	\cref{eq:decomp-1,eq:decomp-2,eq:decomp-3} vanish, respectively, because
	$0=p_H(x)-\mathbb{E}_Y[Y\given X=x,H]
		=p_D(x)-\mathbb{E}_H[p_H(x)\given D]
		=p_B(x)-\mathbb{E}_D[p_D(x)]$.
\end{proof}

\begin{example}[Linear regression]\label{ex:linear}
	Suppose $p_{H}(x) = x^{T}(\beta + \epsilon_{H}(x))$, where $\epsilon_{H}(x)$ has zero mean and covariance $\Sigma_{x}$, and the AI model is null $p_{A}\equiv 0$. Consider perfect training-conditional and training-marginal learners recovering $\beta_{D}(x)\coloneqq \mathbb{E}_{H}[\beta + \epsilon_{H}(x)\given D]$ and $\beta_{B} \coloneqq \mathbb{E}_{D}[\beta_{D}] = \mathbb{E}_{H}[\beta + \epsilon_{H}(x)] = \beta$. Let $p_{D}(x) = x^{T}\beta_{D}(x)$ and $p_{B}(x) = x^{T}\beta_{B}$ be their corresponding predictors.
	By the law of total variance,
	\[
		&\mathbb{E}[(p_{H}(x)-p_{B}(x))^{2}] = x^{T}\Var(\epsilon_{H})x
		= x^T\mathbb{E}[\Var(\epsilon_H(x)\given D)]x + x^{T}\Var(\mathbb{E}[\epsilon_H(x)\given D])x \\
		=& \mathbb{E}[(p_{H}(x) - p_{D}(x))^{2}] + x^{T}\Var(\mathbb{E}[\epsilon_H(x)\given D])x
		\ge \mathbb{E}[(p_{H}(x) - p_{D}(x))^{2}]
		.\]
	When $\epsilon_H$ is homoskedastic and independent, $\mathbb{E}[\epsilon_H(x) \given D] = 0$, and thus the two predictors have the same Brier risk; when it's not, the training-conditional predictor $p_{D}$ can be strictly better by leveraging the richer information set $D$ to adapt to the local uncertainty structure.
\end{example}

\section{Extended discussion of rectification}
\label{apx:rect}

\subsection{Prior regression}

At an offline checkpoint, write the available labeled data as $D=\{(X_i,Y_i)\}_{i=1}^n$, let $\mathbf X$ have rows $X_i^T$, and let $\mathbf r=(r(X_1),\ldots,r(X_n))^T$. The prior in \cref{sec:rect} gives
\[
	\mathbf r\sim\Normal(\mathbf X\beta,K),\qquad
	K=\tau^2\Sigma,\qquad
	\Sigma=\rho K_\phi+(1-\rho)K_A,
\]
where $(K_\phi)_{ij}=k_\phi(X_i,X_j)$ and $(K_A)_{ij}=k_A(X_i,X_j)$, with $\tau>0$ and $\rho\in[0,1]$.

Conditional on the residual process, the labels are independent. By link symmetry, their log-likelihood contributions are $\ell_i(r)=\log g^{-1}\!\left((2Y_i-1)(g(p_A(X_i))+r)\right).$ We fit $(\beta,\phi,\tau,\rho)$ by maximizing the marginal likelihood
\[
	\Lambda(\beta,\phi,\tau,\rho)
	=\int_{\R^n}\exp\!\left(\sum_{i=1}^n\ell_i(r_i)\right)
	\Normal(\mathbf r;\mathbf X\beta,K)\,\mathrm d\mathbf r.
\]
The resulting posterior is
\[
	p(\mathbf r\given D)
	=\Lambda^{-1}\exp\!\left(\sum_{i=1}^n\ell_i(r_i)\right)
	\Normal(\mathbf r;\mathbf X\beta,K).
	\label{eq:gpc-exact-posterior}
\]
Let $\mathbf m$ and $V$ denote its posterior mean vector and covariance matrix, and set
$V_\beta=V+(\mathbf m-\mathbf X\beta)(\mathbf m-\mathbf X\beta)^T$.
Differentiating the marginal likelihood under the integral gives
\[
	\nabla_\beta\log \Lambda
	&=\mathbf X^T K^{-1}(\mathbf m-\mathbf X\beta), \\
	\nabla_{\alpha} \log \Lambda
	&=\frac12\tr\!\left((K^{-1}V_\beta K^{-1}-K^{-1})\nabla_{\alpha} K
	\right),\qquad \alpha\in\{\tau^2,\rho,\phi_j\}.
\]
In particular, setting the mean and amplitude derivatives to zero yields
\[
	\beta=(\mathbf X^T\Sigma^{-1}\mathbf X)^{-1}\mathbf X^T\Sigma^{-1}\mathbf m, \qquad
	\tau^2=\frac1n\tr(\Sigma^{-1}V_\beta)
	\label{eq:gpc-prior-updates}
	.\]

System \cref{eq:gpc-prior-updates} consists of coupled stationarity equations, as the posterior moments depend on the prior parameters.
This gives a practical expectation--maximization (EM) procedure. The E-step computes $(\mathbf m,V)$ under the current prior. Holding these moments fixed, the M-step increases the expected Gaussian log prior $-\frac12(\log|K|+\tr(K^{-1}V_\beta))$. Fixing $(\phi,\rho)$, update $\beta$ and then $\tau^2$ by \cref{eq:gpc-prior-updates}, recomputing $V_\beta$ after updating $\beta$. Then update $\rho\in[0,1]$ and $\phi$ numerically using $\nabla_{\rho} K=\tau^2(K_\phi-K_A)$ and $\nabla_{\phi} K=\tau^2\rho\nabla_{\phi} K_\phi$.

\subsection{Posterior inference}

Fix the fitted prior, abbreviate its kernel as $k=k_{\tau,\rho,\phi}$, and write $k(x,\mathbf X)=(k(x,X_1),\ldots,k(x,X_n))$. Gaussian conditioning gives
\[
	r(x)\given\mathbf r\sim\Normal\!\Bigl(
	x^T\beta+k(x,\mathbf X)K^{-1}(\mathbf r-\mathbf X\beta),
	k(x,x)-k(x,\mathbf X)K^{-1}k(\mathbf X,x)\Bigr).
\]
Integrating over \cref{eq:gpc-exact-posterior} gives the (non-Gaussian) posterior process, whose mean and covariance follow from total expectation and covariance:
\[
	m_t(x)
	&=x^T\beta+k(x,\mathbf X)K^{-1}(\mathbf m-\mathbf X\beta),
	\label{eq:gpc-exact-test-mean}\\
	k_t(x,x')
	&=k(x,x')-k(x,\mathbf X)K^{-1}k(\mathbf X,x')+k(x,\mathbf X)K^{-1}VK^{-1}k(\mathbf X,x').
	\label{eq:gpc-exact-test-variance}
\]
On the labeled inputs, $\mathbf m=m_t(\mathbf X)$ and $V=k_t(\mathbf X,\mathbf X)$; $K$ remains the prior covariance matrix. The formulas are exact when $(\mathbf m,V)$ are exact posterior moments. For implementation, we approximate the latent posterior by $\Normal(\mathbf m,V)$ using EP or Laplace moments below; the same formulas then define the approximate process $R_t\sim\GP(m_t,k_t)$.

Under this Gaussian approximation, the rectified prediction is the posterior mean:
\[
	p_t(x)=\int_\R g^{-1}(g(p_A(x))+r)
	\Normal(r;m_t(x),k_t(x,x))\,\d r.
	\label{eq:gpc-general-prediction}
\]
For a general link this is a one-dimensional integral. For the probit link $g=\Phi^{-1}$, introduce an independent $Z\sim\Normal(0,1)$.
We have $Z-R_t(x)\sim\Normal(-m_t(x),1+k_t(x,x))$ and thus
\[
	p_t(x)=\PP\!\left(Z-R_t(x)\leq g(p_A(x))\right)
	=\Phi\!\left(\frac{g(p_A(x))+m_t(x)}{\sqrt{1+k_t(x,x)}}\right),
	\label{eq:probit-predictive}
\]

\subsection{EP and ADF approximation}

\textbf{Offline: expectation propagation.}
EP approximates each likelihood factor by a Gaussian site \citep{minka2013ExpectationPropagation}:
\[
	\exp(\ell_i(r_i))\approx\widetilde t_i(r_i)
	\propto\exp\!\left(-\tfrac12\widetilde\lambda_i r_i^2+\widetilde\nu_i r_i\right).
\]
Multiplying the sites by the prior gives a Gaussian approximation with
\[
	V^{-1}=K^{-1}+\diag(\widetilde{\boldsymbol\lambda}),\qquad
	\mathbf m=V(K^{-1}\mathbf X\beta+\widetilde{\boldsymbol\nu}).
\]
To update site $i$, remove it from the current marginal to form the Gaussian cavity $\Normal(b,v)$, where
\[
	v=(V_{ii}^{-1}-\widetilde\lambda_i)^{-1},\qquad
	b=v(m_i/V_{ii}-\widetilde\nu_i).
\]
Multiply this cavity by $\exp(\ell_i(r))$ and compute the normalized tilted mean $\tilde b$ and variance $\tilde v$. Moment matching gives the new site parameters
\[
	\widetilde\lambda_i=\tilde v^{-1}-v^{-1},\qquad
	\widetilde\nu_i=\tilde b/\tilde v-b/v.
\]
Recompute the Gaussian approximation and revisit sites until the updates stabilize.

\textbf{Online: assumed-density filtering.}
ADF processes each new active label once.
Let the current approximate residual process be
$R_t \sim \GP(m_t,k_t)$.%
\footnote{As will be clear from the context, $R_t$ denotes both the marginal rectifier for the next (test) input $X$ and the approximate residual process on the entire input space $\X$.}
After an escalated query $(X_t,Y_t)$, set $b=m_t(X_t)$, $v=k_t(X_t,X_t)$, form the one-dimensional tilted marginal $\tilde{R} \propto g^{-1} ((2Y_{t}-1)(g(p_{A}(X_t))+\tilde{R}))\Normal(\tilde{R};b,v)$, and calculate $\tilde{b}=\EE[\tilde{R}]$ and $\tilde{v}=\Var(\tilde{R})$.
For $v>0$, Gaussian conditioning gives
\[
	\EE[R_t(x)\given R_t(X_t)=r]
	&=m_t(x)+v^{-1} k_t(x,X_t)(r-b),\label{eq:adf-conditional-mean}\\
	\Cov(R_t(x),R_t(x')\given R_t(X_t)=r)
	&=k_t(x,x')-v^{-1} k_t(x,X_t)k_t(X_t,x').
	\label{eq:adf-conditional-covariance}
\]
ADF matches the first two moments of the Gaussian approximation of $R_{t+1}$ to the above conditional moments using the laws of total expectation and covariance, giving the functional update:
\[
	m_{t+1}(x)
	&=m_t(x)+v^{-1} k_t(x,X_t)(\tilde{b} - b),\label{eq:adf-mean}\\
	k_{t+1}(x,x')
	&=k_t(x,x')+v^{-2}k_t(x,X_t)k_t(X_t,x')(\tilde v-v).
	\label{eq:adf-covariance}
\]
If $v=0$, positive semidefiniteness implies $k_t(x,X_t)=0$ for every $x$, and the process is unchanged.

Each update requires only a one-dimensional integral for the tilted marginal and a rank-one covariance correction.

\textbf{Closed-form probit updates.}
Differentiating the Gaussian density in tilted marginal with respect to $b$ yields
\begin{equation}
	\tilde b=b+v\,\partial_b\log N_t,\qquad
	\tilde v=v+v^2\,\partial_b^2\log N_t.
	\label{eq:adf-moment-identities}
\end{equation}
Indeed, $\partial_b\log\Normal(r;b,v)=(r-b)/v$, so $\partial_b\log N_t=(\tilde b-b)/v$ and a second differentiation gives $\partial_b^2\log N_t=(\tilde v-v)/v^2$.
For probit link $g=\Phi^{-1}$, let
\[
	\zeta=\frac{(2Y_t-1)(g(p_A(X_t))+b)}{\sqrt{1+v}},\qquad
	a(u)=\frac{\varphi(u)}{\Phi(u)},
	\label{eq:probit-tilt-notation}
\]
where $\varphi$ is the standard normal density. The same Gaussian convolution as in \cref{eq:probit-predictive} gives $N_t=\Phi(\zeta)$. Since $a'(u)=-a(u)(u+a(u))$, \cref{eq:adf-moment-identities} gives
\[
	\tilde b=b+\frac{(2Y_t-1)v}{\sqrt{1+v}}a(\zeta),\qquad
	\tilde v=v-\frac{v^2}{1+v}a(\zeta)(\zeta+a(\zeta)).
\]
Substitution into \cref{eq:adf-mean,eq:adf-covariance} gives the explicit functional updates
\[
	m_{t+1}(x) =&m_t(x)+\frac{(2Y_t-1)k_t(x,X_t)}{\sqrt{1+v}}a(\zeta),\\
	k_{t+1}(x,x') =&k_t(x,x')-\frac{k_t(x,X_t)k_t(X_t,x')}{1+v}a(\zeta)(\zeta+a(\zeta)).
\]

\subsection{Laplace and MAP approximation}

\textbf{Offline: Laplace approximation.}
Laplace approximation uses a mode and local curvature rather than matching posterior moments to determine the Gaussian approximation.
Specifically, it replaces the log posterior by its quadratic expansion at the maximum a posteriori (MAP) residual vector \citep{rasmussen2008GaussianProcesses}:
\[
	\widehat{\mathbf r}^{\mathrm{MAP}}
	=\argmax_{\mathbf r}\left\{\sum_{i=1}^n\ell_i(r_i)
	-\frac12(\mathbf r-\mathbf X\beta)^TK^{-1}(\mathbf r-\mathbf X\beta)\right\}.
\]
The gradient is $(\ell_i'(r_i))_{i=1}^n-K^{-1}(\mathbf r-\mathbf X\beta)$, and the negative Hessian is $K^{-1}+\diag(-\ell_i''(r_i))$, giving Newton iterations for the mode.
At the mode, the Gaussian approximation has
\[
	\mathbf m=\widehat{\mathbf r}^{\mathrm{MAP}},\qquad
	V=(K^{-1}+\diag(-\ell_i''(\widehat r_i^{\mathrm{MAP}})))^{-1}.
\]
Substituting these into \cref{eq:gpc-exact-test-mean,eq:gpc-exact-test-variance} gives the approximate posterior process. Predictions still integrate over this Gaussian process as in \cref{eq:gpc-general-prediction}.

\textbf{Online: Newton step for maximum a posteriori.}
The Gaussian log-density $\log\Normal(r;b,v)$ is already quadratic. Expand the log-likelihood at $r=b$:
\[
	\ell_t(r)\approx\ell_t(b)+\ell_t'(b)(r-b)+\tfrac12 \ell_t''(b)(r-b)^2
\]
Set $d=\ell_t'(b)$ and $w=-\ell_t''(b)$. The local expansion
gives a Gaussian tilted approximation with mean $b+vd/(1+vw)$ and variance $v/(1+vw)$. Substituting these for $(\tilde b,\tilde v)$ in \cref{eq:adf-mean,eq:adf-covariance} gives
\[
	m_{t+1}(x)
	&=m_t(x)+\frac{k_t(x,X_t)d}{1+vw},\label{eq:newton-mean}\\
	k_{t+1}(x,x')
	&=k_t(x,x')-\frac{w\,k_t(x,X_t)k_t(X_t,x')}{1+vw}.
	\label{eq:newton-covariance}
\]

\textbf{Closed-form logistic and probit updates.}
For the logit link, let $p=g^{-1}(g(p_A(X_t))+b)$. Direct differentiation gives $d=Y_t-p$ and $w=p(1-p),$ which yields the usual recursive logistic Newton update through \cref{eq:newton-mean,eq:newton-covariance}.
For the probit link, use the ratio $a$ from \cref{eq:probit-tilt-notation} and set $\zeta=(2Y_t-1)(g(p_A(X_t))+b)$. Differentiating $\ell_t(b)=\log\Phi(\zeta)$ gives $d=(2Y_t-1)\,a(\zeta)$ and $w=a(\zeta)(\zeta+a(\zeta))$. Here $\zeta$ has no factor $\sqrt{1+v}$ because we evaluate the likelihood derivatives at $b$.

\section{Extended analysis of calibration}\label{apx:cali}

\textbf{Initial: confidence-based nonconformity score.}
It's natural to use the confidence measure $c_t$ (provided by the AI model and rectified by the rectifier) to construct our consistency check, referred to as a \emph{nonconformity score}.
Formally, for a (rectified) model consisting of a classifier and a confidence function $\M_{t} = (f_t,c_t)$,
its \emph{decision set} is given by \cref{eq:care}:
\[
	\mathcal{S}(\lambda; \M_t, x) \coloneqq \begin{cases}
		\{f_{t}(x)\}, & \text{if } c_t(x) > \lambda(x); \\
		\mathcal{Y},  & \text{o.w.}
	\end{cases}
\]
That is, when the decision set is a singleton, the model is confident enough to make a decision; otherwise, it is ambiguous and requires human review.
The corresponding nonconformity score is the generalized inverse of this decision set:
\[
	s(x,y;\M_t) \coloneqq \inf\{\lambda\in[0,1]:y\in \mathcal{S}(\lambda; \M_t,x)\}
	=c_t(x)\Ind\{f_t(x)\neq y\},
\]
which is zero if $y$ agrees with the prediction and equals the model confidence when it disagrees.
Intuitively, when $y$ is the true label, a confident error receives a higher score than an uncertain one.
For $c_t(x)=\max\{p_t(x),1-p_t(x)\}$ in binary classification, this recovers the prediction-based score of \citet{romano2020ClassificationValid}.
Our construction is tailored to our escalation rule and accommodates model-provided confidence measures.

\subsection*{Step 1. Augmented particles}

We first reformulate our method more concisely as follows. Fix $t_0\le t$, with the last offline checkpoint preceding arrival $X_{t_0}$.
Let $D_{0}$ collect data available at that checkpoint and $D_{t_0:t-1}$ collect actively escalated data from the online stream at times $t_0,\ldots,t-1$.
For each particle, let $D^{(k)}$ be the corresponding left-out data set.
For simplicity, we suppose all left-out data sets have the same size $m$.
Let $D^{(K+1)}$ augment the incoming test input $X_t$ with the test label $y$ plus another $m-1$ data points in $D_0$ to match the size of the left-out data sets.%
\footnote{%
	A simple mental model is that $\{ D^{(k)}  \}_{k=1}^{K+1}$ are disjoint. For example, for $K$-fold cross-conformal, we can first reserve $(n_0+1)/(K+1)-1$ data points for test fold augmentation, and then partition the remaining data points into $K$ disjoint sets. Leave-one-out cross-conformal is another example where no test augmentation is needed.
	However, all of our analysis goes through without requiring the disjointness, via the multiset formalism.%
}
These left-out datasets should be chosen by a fixed rule independent of the data values.
When $m=1$, we recover the leave-one-out setting.
Let $D^{-(k,k')}\coloneqq (D_{0}\cup D^{(K+1)}) \setminus (D^{(k)}\cup D^{(k')})$, and define augmented model particles as
\[
	\M^{-(k,k')} = \M_\mathsf{on}\left(\mathsf{M}\left(D^{-(k,k')}\right), D_{t_0:t-1}\right),\quad k,k'\in[K+1]
	.\]
The $k$-th actual model particle is then $\M^{(k)} = \M^{-(k,K+1)}$.
For each left-out dataset, we designate a representative data point $z^{(k)}$ by a fixed slot rule, and designate $z^{(K+1)}\coloneqq(X_t,y)$ for $D^{(K+1)}$.
We associate each particle with the nonconformity scores evaluated on their designated data points:
\[
	s^{-(k,k')} = s(\M^{-(k,k')}, z^{(k)}),\quad k\ne k'\in[K+1]
	,\]
or $s^{-(k,k')}=\infty$ when $k=k'$.
Note that the actual nonconformity score of the $k$-th particle at the test input is $s(\M^{(k)},(X_t,y)) = s^{-(K+1,k)}$.

\subsection*{Step 2. Weighted conformity}

When all left-out datasets are exchangeable and the learning method is permutation-invariant, the scores $s^{-(k,k')}$ are equivariant and thus conformal with the test score $s^{-(K+1,k)}$.
However, in our setting, the left-out datasets are drawn from different distributions due to the adaptive data stream, and thus they have different impact on the model particles.
We thus need to weigh their conformity w.r.t. the test score.

To do so, we need to delineate two things each data point specifies: a time slot and a realized sample value.
Then, we consider the following conditioning: $K+1$ groups of left-out time slots are realized, but the mapping from each slot group to the sample group is not revealed.
Under this conditioning, each sample group has some probability of occupying the test slot group, and thus the scores should be weighed by the corresponding probability.

Formally, define the following conditioning information $\mathcal{D}$:
the full redacted arrival slot log which records the escalation decisions is known;
the grouped slots of all left-out datasets are known;
all collected samples are observed;
the slot-to-sample mapping of slots outside the union of the left-out datasets, including the online slots, is known;
and the grouping and within-group order of the left-out samples, including the test point, are known.
The only thing we mask---the mapping from left-out slot groups to left-out sample groups---is then an element of the $(K+1)$-permutation group $S_{K+1}$.
We can further augment $\mathcal{D}$ by restricting the mapping to be in a transitive subgroup $G$ of $S_{K+1}$, e.g., the $(K+1)$-cyclic group.
Thus, the remaining assignments are exactly those obtained by applying $G$ to the realized assignment.
Our results hold for any permutation subgroup.
A smaller group reduces the number of permutations to consider, hence making likelihood calculation easier; however, it may produce more concentrated weights and reduce the diversity of our particle system.
For example, if the subgroup is non-transitive and never maps a fold to the test slot, then the corresponding particle has zero weight and will not contribute.

Let $D^{\circ}_{t,z}$ be the observed data stream $((X_i,A_i,A_iY_i))_{i=1}^{t-1}$ appended with the test point $z=(X_t,y)$, where unqueried labels are redacted.
With a slight abuse of notation, we also denote $G$ as the extended group of permutations on the entire data stream, which is the identity on the slots outside the union of the left-out datasets, including the unqueried slots and online stream, and keeps the relative order of the slots inside each left-out dataset.
An immediate result is that the permutations in $G$ do not change our collection of model particles. Specifically, for any $\sigma\in G$ applied on the entire data stream, the new model particles are
\[
	&\M^{-(k,k')}(\sigma(D^{\circ}_{t,z})) \\
	&= \M_\mathsf{on}\left(\mathsf{M}\left(\sigma((D_0\cup\{z\})\setminus (D^{(\sigma^{-1}(k))}\cup D^{(\sigma^{-1}(k'))}))\right), \sigma(D_{t_0:t-1})\right) \label{eq:permu-1}\\
	&=\M_\mathsf{on}\left(\mathsf{M}\left(\sigma(D^{-(\sigma^{-1} (k),\sigma^{-1} (k'))})\right), D_{t_0:t-1}\right)\label{eq:permu-2}\\
	&=\M_\mathsf{on}\left(\mathsf{M}\left(D^{-(\sigma^{-1} (k),\sigma^{-1} (k'))}\right), D_{t_0:t-1}\right)\label{eq:permu-3}\\
	&=\M^{-(\sigma^{-1} (k),\sigma^{-1} (k'))}(D^{\circ}_{t,z})
	,\quad k,k'\in[K+1]
	,\]
where \cref{eq:permu-1} is because $\sigma$ is separable between the online stream and the rest of the data stream, and when we leave out the $k$-th batch of slots under sample permutation $\sigma$, we are actually removing the original $\sigma^{-1}(k)$-th left-out dataset;
\cref{eq:permu-2} is because $D_0\cup D^{(K+1)}=D_0\cup\{z\}$ and $\sigma$ preserves the online stream;
and \cref{eq:permu-3} is because $\mathsf{M}$ is permutation-invariant.

As a result, under permutation $\sigma$, the original score $s^{-(k,k')}$ moves to position $(\sigma(k),\sigma(k'))$, for any $k,k'\in[K+1]$, and thus the score array, indexed by sample groups, is fixed under $\mathcal{D}$.%
\footnote{We also require that elements of $G$ map designated data points to designated data points.}
Therefore, the probability that the $k$-th sample group supplies the test-score row is exactly its probability of occupying the test slot group under $\mathcal{D}$.
This need not be true when the learning method is permutation-variant w.r.t. elements in $G$, as the scores may change under each permutation.
Our method resolves this by restricting the permutation group to a subgroup $G$ under which the learning method is permutation-invariant.

\subsection*{Step 3. Risk control}

Let the probability of the $k$-th sample group occupying the test slot under $\mathcal{D}$ be $w^{(k)}$; note that $w^{(k)}$ depends on $y$ and is a functional of $X_t$, but here we suppress this dependence for notational simplicity.
Then, the probability that group $k$ wins its comparison with the random test group is
\[
	& \sum_{k'=1}^{K+1} w^{(k')} \Ind\left\{s(\M^{-(k,k')},z^{(k)} ) > s(\M^{-(k,k')},z^{(k')} ) \right\} \\
	=& \sum_{k'=1}^{K+1} w^{(k')} \Ind\{s^{-(k,k')} > s^{-(k',k)} \}
	.\]
We see that each summand puts $k'$ at the test slot and is scaled with its likelihood.
Following \citet{barber2021PredictiveInference}, we call a group \emph{strange} if the above probability is unusually large, giving the \emph{strange set}:
\[
	\mathcal{U}\coloneqq \left\{ k\in [K+1]: \sum_{k'=1}^{K+1}w^{(k')}\Ind \left\{ s^{-(k,k')} > s^{-(k',k)} \right\} \ge 1-\epsilon /2 \right\}
	.\]
Recall that each score has some probability to be the test score. Thus, the probability of the test group being strange is $u=\sum_{k=1}^{K+1} w^{(k)} \Ind\left\{ k\in \mathcal{U} \right\}$. We are left to show that this probability is small.
By definition, we have
\[
	u(1-\epsilon /2) =& \sum_{k=1}^{K+1}w^{(k)}\Ind \left\{ k\in \mathcal{U} \right\} \cdot (1-\epsilon /2) \\
	\le& \sum_{k=1}^{K+1}w^{(k)}\Ind \left\{ k\in \mathcal{U} \right\} \sum_{k'=1}^{K+1} w^{(k')} \Ind\{s^{-(k,k')} > s^{-(k',k)}\} \\
	=& \sum_{k\in \mathcal{U},k'\in[K+1]} w^{(k)} w^{(k')}  \Ind\{s^{-(k,k')} > s^{-(k',k)}\} \\
	=& \left( \sum_{k\in \mathcal{U},k'\in \mathcal{U}} + \sum_{k\in \mathcal{U}, k'\in[K+1]\setminus \mathcal{U}} \right) w^{(k)} w^{(k')}  \Ind\{s^{-(k,k')} > s^{-(k',k)}\}
	.\]
For the first summation, we have
\[
	&\sum_{k\in \mathcal{U},k'\in \mathcal{U}} w^{(k)} w^{(k')}  \Ind\{s^{-(k,k')} > s^{-(k',k)}\}\\
	=& \frac{1}{2} \sum_{k\in \mathcal{U},k'\in \mathcal{U}} w^{(k)} w^{(k')}  (\Ind\{s^{-(k,k')} > s^{-(k',k)}\} + \Ind\{s^{-(k,k')} < s^{-(k',k)}\})\\
	\le& \frac{1}{2} \sum_{k\in \mathcal{U},k'\in \mathcal{U}} w^{(k)} w^{(k')}  \left( 1 - \Ind \left\{k=k'\right\} \right)\\
	=& \frac{1}{2} \left(u^{2} - \sum_{k\in \mathcal{U}} (w^{(k)})^{2}\right)
	,\]
where the inequality is because there are cases when $s^{-(k,k')} = s^{-(k',k)}$ but $k\ne k'$.
The second summation can be bounded by
\[
	\sum_{k\in \mathcal{U},k'\not\in \mathcal{U}} w^{(k)} w^{(k')}  \Ind\{s^{-(k,k')} > s^{-(k',k)}\}
	\le\sum_{k\in \mathcal{U},k'\not\in \mathcal{U}} w^{(k)} w^{(k')}
	=\sum_{k\in \mathcal{U}} w^{(k)} (1-u)
	=u (1-u)
	.\]
Combining both gives
\[
	u(1-\epsilon /2) \le \frac{1}{2}u^{2} + u(1-u) = u - \frac{1}{2}u^{2}
	\implies u \le \epsilon
	.\]

Connecting back to our prediction set
\[
	\calC_{t,\epsilon}(x)
	=\left\{y\in\Y:\sum_{k=1}^{K}w^{(k)}(x,y)\Ind\left\{s(\M^{(k)},z^{(k)})<s(\M^{(k)},(x,y))\right\}<1-\epsilon/2\right\}
\]
gives
\[
	P(Y_t\notin \calC_{t,\epsilon}(X_t)) = \EE[u] \le \epsilon
	.\]

\subsection*{Step 4. Counterfactual likelihood}

We are left to calculate the weight functional $w^{(k)}$ under $\mathcal{D}$.
Writing $X_i^\sigma=X_{\sigma^{-1}(i)}$, we have
\[
	&w^{(k)}(x,y) = \frac{\sum_{\sigma\in G:\sigma(k)=K+1}\Lambda_t^\sigma(x,y)}{\sum_{\sigma\in G}\Lambda_t^\sigma(x,y)}, \label{eq:inv-cond}\\
	\text{where}\quad &\Lambda_t^\sigma(x,y) = \prod_{i=1}^{t-1}\ell(X_i^\sigma,A_i\given\mathcal{F}_{i-1}^\sigma),\\
	\text{where}\quad &\ell(x,a\given \mathcal{F}) = \pi(x\given\mathcal{F})^{a}(1-\pi(x\given\mathcal{F}))^{1-a}
	,\]
where $(\mathcal{F}_{i}^{\sigma})_{i=0}^{t-1}$ is the observed filtration generated by replaying the learning and acquisition algorithms on the counterfactual stream $\sigma(D^{\circ}_{t,z})$;
$\pi(x\given \mathcal{F})$ is the algorithm's probability to escalate $x$ given the filtration $\mathcal{F}$;%
\footnote{In this section, $A$ denotes the general escalation indicator, and $\pi$ its corresponding escalation probability.}
and \cref{eq:inv-cond} follows because the i.i.d. covariate and conditional-label density factors in the unconditioned stream likelihood are invariant under $G$ and cancel upon conditioning on $\mathcal{D}$. The product includes unqueried arrivals but no acquisition factor for the test point.

We see that to calculate the weight functionals, we have to replay the learning algorithm on each counterfactual stream $\sigma(D^{\circ}_{t,z})$ for all $\sigma\in G$.
For the online period, we have a simplifying factorization:
\[
	w^{(k)}(x,y) \propto& \sum_{\sigma\in G :\sigma(k)=K+1}  \Lambda^{\sigma}_{t_0}(x,y) \cdot \prod_{j=t_0}^{t-1} \ell(X_{j},A_{j}\given \mathcal{F}_{t_0-1}^{\sigma}, (Z_{j'})_{j'=t_0}^{j-1}) \\
	=& \left( \sum_{\sigma\in G :\sigma(k)=K+1} \Lambda^{\sigma}_{t_0}(x,y) \right) \cdot \prod_{j=t_0}^{t-1} \ell(X_{j},A_{j}\given \mathcal{F}^{(k)} _{t_0-1}, (Z_{j'})_{j'=t_0}^{j-1}) \\
	\propto & w^{(k)}_{t_0}(x,y) \cdot \prod_{j=t_0}^{t-1} \ell(X_{j},A_{j}\given \mathcal{F}^{(k)} _{t_0-1}, (Z_{j'})_{j'=t_0}^{j-1})
	,\]
where $\mathcal{F}^{(k)} _{t_0-1}$ denotes the counterfactual history obtained by switching only the designated data point $z^{(k)}$ in the $k$-th left-out dataset with the test point $z$ and no other permutation.
We use the fact that permutations in $\left\{ \sigma\in G : \sigma(k)=K+1 \right\}$ generate the same set of offline data points $(D_0\cup\{z\})\setminus\{z^{(k)}\}$, and that the online period is order-invariant with respect to the offline dataset.
Therefore, after obtaining the weights $w^{(k)}_{t_0}$ after an offline checkpoint, we only need to co-evolve the particles to maintain $\mathcal{F}^{(k)}_{t-1}$ without doing any replay, enabling efficient online adaptation of the weights.
Again, this does not hold if either the offline learning is order-variant or the permutation changes the online stream.

\section{Extended analysis of safety escalation} \label{apx:esc}

\subsection{Proof of \texorpdfstring{\cref{cor:cali}}{Corollary 1}}

The risk of our ensemble in \cref{eq:care} reads
\[
	R_{01}(\hat{Y}_t) =& P(\hat{Y}_t\ne Y_t)\\
	=& P((\hat{Y}_t\ne Y_t)\land(A_t \lor E_t)) + P((\hat{Y}_t\ne Y_t)\land \lnot(A_t \lor E_t))\\
	=& P(A_t\lor E_t)P(Y_t\ne Y_t\given A_t\lor E_t) + P((\hat{Y}_t\ne Y_t) \land \lnot A_t \land \lnot E_t) \\
	\le& P((\hat{Y}_t\ne Y_t) \land \lnot E_t) \\
	\le& P((1-f_t(X_t)= Y_t) \land (c_t(X_t) > \lambda_{t}(X_t))) \label{eq:exclusion}
	.\]
Therefore, we only need to bound the probability of the true label being the opposite label when we accept the current prediction.
By definition \cref{eq:lambda},
\[
	& c_t(X_t) > \lambda_{t}(X_t) \\
	\iff& c_t(X_t) - \gamma^{c}_t(X_t)  > \q_{\frac{1-\epsilon /2}{W_{t}(x)}}\left(\sum_{k\in \mathcal{W}_t(x)} \frac{w_{t}^{(k)} (x)}{W_{t}(x)}\delta\{s_{t}^{(k)} \}\right) \\
	\implies& \min_{k\in \mathcal{W}_t(X_t)}c_t^{(k)}(X_t) > \q_{\frac{1-\epsilon /2}{W_{t}(x)}}\left(\sum_{k\in \mathcal{W}_t(x)} \frac{w_{t}^{(k)} (x)}{W_{t}(x)}\delta\{s_{t}^{(k)} \}\right) \\
	\iff& \min_{k\in \mathcal{W}_t(X_t)}c_t^{(k)}(X_t) > \q_{1-\epsilon /2}\left( \sum_{k=1}^{K} w_{t}^{(k)} \delta\{\iota_{k}s_{t}^{(k)} \} \right)
	,\]
where we use shorthand $s_t^{(k)}\coloneqq s_t^{(k)}(z^{(k)})$ and $\iota_{k}$ is the characteristic function that equals $1$ if $k\in \mathcal{W}_{t}(X_t)$ and $+\infty$ otherwise.
Further, this condition is equivalent to
\[
	&\sum_{k\in \mathcal{W}_t(X_t)} w_{t}^{(k)} \Ind\left\{\min_{k'\in \mathcal{W}_t(X_t)}c_t^{(k')}(X_t) > s_{t}^{(k)}\right\} \ge 1-\epsilon /2\\
	\implies&\sum_{k\in \mathcal{W}_t(X_t)} w_{t}^{(k)} \Ind\left\{c_t^{(k)}(X_t) > s_{t}^{(k)}\right\} \ge 1-\epsilon /2
	.\]
Recall that $\{w^{(k)}_t\}$ concern the opposite label $1-f_t(X_t)$, and $f_t^{(k)} (X_t) = f_t(X_t)$ for all agreeing particles $k\in \mathcal{W}_t(X_t)$.
Thus, the above condition is equivalent to
\[
	&\sum_{k\in \mathcal{W}_t(X_t)} w_{t}^{(k)}(X_t,1-f_t(X_t)) \Ind\left\{c_t^{(k)}(X_t) > s_{t}^{(k)}(z^{(k)} )\right\} \ge 1-\epsilon /2 \\
	\iff&\sum_{k=1}^{K}  w_{t}^{(k)}(X_t,1-f_t(X_t)) \Ind\left\{c_t^{(k)}(X_t) \Ind \{ f_t^{(k)}(X_t) \ne 1-f_t(X_t) \} > s_{t}^{(k)}(z^{(k)} )\right\} \ge 1-\epsilon /2 \\
	\iff&\sum_{k=1}^{K}  w_{t}^{(k)}(X_t,1-f_t(X_t)) \Ind\left\{s_t^{(k)}(X_t, 1-f_t(X_t) ) > s_{t}^{(k)}(z^{(k)})\right\} \ge 1-\epsilon /2
	.\]
By definition \cref{eq:pred-set}, the above condition is equivalent to
\[
	1-f_t(X_t) \notin \calC_{t,\epsilon}(X_t)
	.\]
Plugging this indication back into \cref{eq:exclusion} gives
\[
	R_{01}(\hat{Y}) \le& P((1-f_t(X_t) = Y_t) \land (1-f_t(X_t) \notin \calC_{t,\epsilon}(X_t))) \\
	=& P((1-f_t(X_t) = Y_t) \land (Y_t \notin \calC_{t,\epsilon}(X_t))) \\
	\le& P(Y_t \notin \calC_{t,\epsilon}(X_t)) \\
	\le& \epsilon.
	,\]
where the final inequality follows from \cref{thm:cali}.

We remark that with equal weights, excluding the opposite label indicates the inclusion of the predicted label in the prediction set $\mathcal{C}_{t,\epsilon}$.
However, this is no longer true under covariate shift.
Nonetheless, \cref{eq:exclusion} bounded by $\epsilon$ implies that it is still safe to accept the prediction.

\subsection{Approximate weights and scores}

During the offline period, calculating \cref{eq:off-weight} can involve numerous policy replays, even with a simple subgroup $G$ as discussed in \cref{apx:cali}.
During the online period, although no counterfactual replays are needed, prioritizing computational efficiency may still call for approximation.
As long as we have a high-probability bound on the approximation error, safety guarantee in \cref{thm:cali} still holds.

Consider approximate weights $\{ \hat{w}^{(k)} \}_{k=1}^{K}$ and scores  $\{ \hat{s}^{(k)} \}_{k=1}^{K}$.
For notational simplicity, we omit the time index $t$ in this subsection.
Following \cref{eq:lambda}, we denote the \emph{exact} calibrated threshold using accurate weights and scores as
\[
	\lambda ^*(x) = \lambda(x; \{w ^{(k)}\}_{k=1}^{K}, \{s ^{(k)} \}_{k=1}^{K}) \coloneqq \q_{\frac{1-\epsilon}{W(x)}}\left( \sum_{k=1}^{K} \frac{w ^{(k)}(x)}{W(x)} \delta\{s ^{(k)} \} \right) + \gamma^{c}(x)
	.\]
Then, if we know with high probability $1-O(\epsilon)$ that the exact $(w,e)$ is in an uncertainty set $\Omega$, we can set
\[
	\lambda(x) = \sup_{(w,e)\in \Omega}\lambda(x; w,e),
\]
which guarantees an excess risk of $O(\epsilon)$.

We give an example with two simple approximation bounds.
Suppose that, for all $k\in[K]$, we have $s ^{(k)}  \le \hat{s}^{(k)}  + \gamma^{s}_{k}$ for scores and $\|w-\hat{w}\|_{\mathrm{TV}}\le \gamma^{w}$ for weights. Then, we can set
\[
	\lambda (x) = \q_{1-\epsilon /2 + \gamma^{w}} \left( \sum_{k=1}^{K}\hat{w}^{(k)}(x)\delta \{\hat{s}^{(k)}+\gamma^{s}_k\} \right) + \gamma^{c}(x)
	,\]
which satisfies \cref{cor:cali}.

\section{Extended discussion of active acquisition} \label{apx:ada}

We first remark that thanks to its modular design, \method can be plugged with any acquisition strategy in the active learning/inference literature \citep{zrnic2026ActiveStatistical}.
For simplicity, our experiments implement safe escalation only ($\pi_t(X_t) = \Ind\{c_t(x_t)\le \lambda_t(x_t)\}$).
Below, we discuss a more general active acquisition strategy tailored to our goal of minimizing query complexity.

Let $A_t$ and $E_t$ be the active and safe escalation decisions at time $t$, respectively.
Concerning short-term efficiency, $A_t$ should escalate a query that is likely to be escalated by $E_t$;
concerning long-term efficiency, $A_t$ should escalate a query that minimizes the future escalation frequency of $E\lor A$.
Ideally, active escalation optimizes
\[
	\min_{\pi_t\in[0,1]} \mathbb{E}_{A_t \sim \operatorname{Bern}(\pi_t)}\left[ A_t(1 - E_t) + \mathbb{E}\left[ \sum_{t'\ge t+1} E_{t'}\lor A_{t'} \middle| A_t \right] \right]
	,\]
where the first term is the additional cost of active escalation at this step, and the second term is the expected future escalation cost.
However, this objective is hard to evaluate as
\begin{enumerate*}[label=(\roman*)]
	\item the future is hard to estimate due to adaptation;
	\item $E_t$ is unknown as we need to avoid $\pi_t$ depending on $\lambda_t$; and
	\item the decision indicators are not smooth.
\end{enumerate*}
Therefore, we consider a myopic and approximate objective:
\[
	& \mathbb{E}_{A_t \sim \operatorname{Bern}(\pi_t)}\left[ A_t(1 - \hat{E}_t) + \eta\mathbb{E}\left[ \hat{E}_{t+1} \middle| A_t \right] \right] \\
	= & \mathbb{E}_{A_t \sim \operatorname{Bern}(\pi_t)}\left[ A_t(1 - \Ind \{ c_t(X_t) < 1-\epsilon \}) + \eta\mathbb{E}\left[ \Ind \{ \hat{c}_{t+1}(X_{t+1}) < 1-\epsilon \} \middle| A_t \right] \right] \\
	= & \pi_t\Ind \{ c_t(X_t) \ge 1-\epsilon \} + \eta (1-\pi_t) P_{X}(c_t(X) < 1-\epsilon) \\
	&+ \eta\pi_{t} \left( p_tP_{X} ( c_{t+1}^{\sss(\!X_t\!,1\!)}(X) < 1-\epsilon ) + \pi_t(1-p_t)P_{X} ( c_{t+1}^{\sss(\!X_t\!,0\!)}(X) < 1-\epsilon )\right)
	,\]
where $p_t = p_t(X_t)$ and $X_{t+1} \sim P_{X}$; besides the one-step lookahead scaled by the effective horizon parameter $\eta$, we also approximate the safe escalation threshold by $1-\epsilon$ and the label distribution of $Y_{t}$ by the model prediction $\operatorname{Bern}(p_{t})$.
Specifically, $\hat{E}_t(c)\coloneqq \Ind \{ c < 1-\epsilon \}$ and $\hat{c}_{t+1} = \mathbb{E}_{\hat{Y}_t \sim \operatorname{Bern}(p_t)}[c^{\sss(\!X_t\!,\hat{Y}_t\!)}_{t+1}]$ where $c^{\sss(\!X_t\!,\hat{Y}_t\!)}_{t+1}$ is the rectifier confidence at $t+1$ after updating the rectifier with the new query $(X_t,\hat{Y}_t)$.

The functional $c \mapsto P_{X}(c(X) < 1-\epsilon)$ may be hard to evaluate. Additionally, the above objective gives a hard decision $\pi_{t}\in\{ 0, 1 \}$. We thus further smooth the objective, giving a \emph{confidence-maximizing} active escalation rule
\[
	\pi_t = g^{-1}\circ h^{(1)}( p_t h^{(2)} _{\epsilon}(c_{t+1}^{\sss(\!X_t\!,1\!)} ) + (1-p_t) h^{(2)}_{\epsilon}(c_{t+1}^{\sss(\!X_t\!,0\!)} ) - h^{(2)}_{\epsilon}(c_t) - \eta^{-1}\Ind \{ c_t(X_t) \ge 1-\epsilon \})
	,\]
where both $h^{(1)}$ and $h^{(2)}_{\epsilon}$ are strictly increasing function, with $h^{(1)}$ being affine onto $\R$ and $h^{(2)} _{\epsilon}(c)$ approximates $P_{X}(c(X)\ge 1-\epsilon)$.

\end{document}